\documentclass[10pt,twocolumn,letterpaper]{article}

\usepackage{cvpr}

\definecolor{cvprblue}{rgb}{0.21,0.49,0.74}
\usepackage[pagebackref,breaklinks,colorlinks,allcolors=cvprblue]{hyperref}
\usepackage{algorithm}
\usepackage{algpseudocode}
\usepackage{amsmath}
\usepackage{bm}
\usepackage{booktabs}
\usepackage{multirow}
\usepackage{colortbl}
\usepackage{xspace}
\usepackage{ntheorem}

\newtheorem{prop}{Proposition}
\newtheorem{lemma}{Lemma}

\newtheorem{ass}{Assumption}
\newtheorem{proof}{Proof}

\definecolor{aliceblue}{rgb}{0.94, 0.97, 1.0}
\definecolor{github}{rgb}{0.780,0.039,0.474}

\newcommand{\methodname}{OA-VAT\xspace}

\def\confName{CVPR}
\def\confYear{2026}

\title{Instance-level Visual Active Tracking with Occlusion-Aware Planning}

\author{
  \textbf{Haowei Sun}\textsuperscript{\rm 1}\thanks{Equal contribution. Email: sunhoward1105@gmail.com, kayjoe0723\\@gmail.com, hgao2729@gmail.com}~~
  \textbf{Kai Zhou}\textsuperscript{\rm 1}\footnotemark[1]~~
  \textbf{Hao Gao}\textsuperscript{\rm 1}\footnotemark[1]~~
  \textbf{Shiteng Zhang}\textsuperscript{\rm 1}~
  \textbf{Jinwu Hu}\textsuperscript{\rm 1 \rm 2}~\\
  \textbf{Xutao Wen}\textsuperscript{\rm 1}~
  \textbf{Qixiang Ye}\textsuperscript{\rm 4}~
  \textbf{Mingkui Tan}\textsuperscript{\rm 1 \rm 3}\footnotemark[2]\thanks{Corresponding author. Email: mingkuitan@scut.edu.cn}\\
  \textsuperscript{1} \small{South China University of Technology,}
  \textsuperscript{2} \small{Pazhou Laboratory,}
  \textsuperscript{3} \small{Key Laboratory of Big Data and Intelligent Robot, Ministry of Education,} \\
  \textsuperscript{4} \small{University of Chinese Academy of Sciences}\\
}

\begin{document}
\maketitle

\begin{abstract}
Visual Active Tracking (VAT) aims to control cameras to follow a target in 3D space, which is critical for applications like drone navigation and security surveillance. However, it faces two key bottlenecks in real-world deployment: confusion from visually similar distractors caused by insufficient instance-level discrimination and severe failure under occlusions due to the absence of active planning. To address these, we propose \textbf{\methodname}, a unified pipeline with three complementary modules. First, a training-free \textit{Instance-Aware Offline Prototype Initialization} aggregates multi-view augmented features via DINOv3 to construct discriminative instance prototypes, mitigating distractor confusion. Second, an Online Prototype Enhancement Tracker enhances prototypes online and integrates a confidence-aware Kalman filter for stable tracking under appearance and motion changes. Third, an Occlusion-Aware Trajectory Planner, trained on our new \texttt{Planning-20k} dataset, uses conditional diffusion to generate obstacle-avoiding paths for occlusion recovery. Experiments demonstrate \methodname achieves 0.93 average SR on UnrealCV (+2.2\% vs. SOTA TrackVLA), 90.8\% average CAR on real-world datasets (+12.1\% vs. SOTA GC-VAT), and 81.6\% TSR on a DJI Tello drone. Running at 35 FPS on an RTX 3090, it delivers robust, real-time performance for practical deployment. The code is available at \href{https://github.com/SHWplus/OA-VAT}{\color{github} \texttt{https://github.com/SHWplus/OA-VAT}}.
\end{abstract}

\begin{figure}[t]
  \centering
  \includegraphics[width=\linewidth]{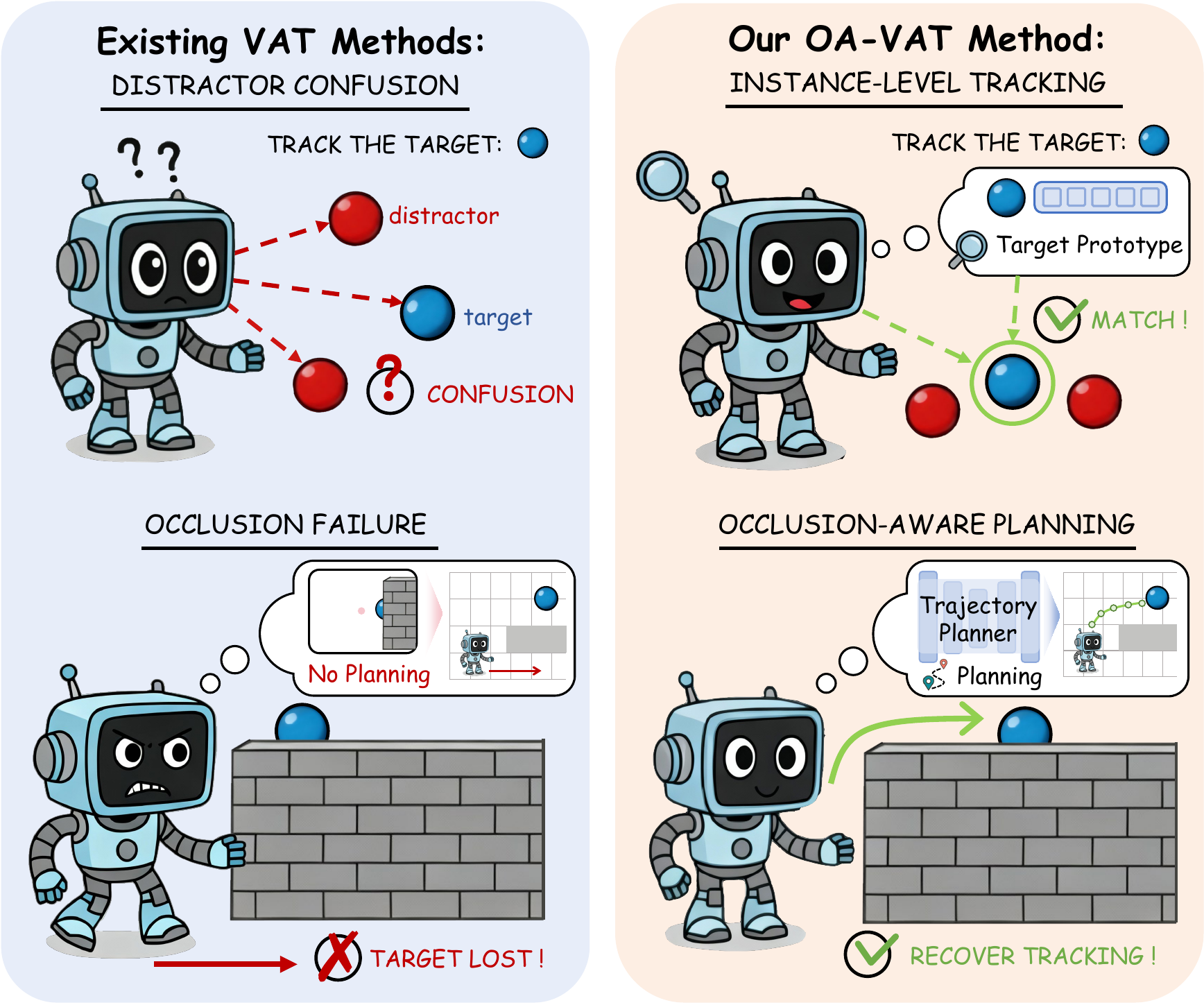}
    \caption{Comparison of existing VAT methods and \methodname. During offline initialization, we construct a target prototype to robustly match the target against distractors. During tracking, we employ a trajectory planner to recover the target under occlusions.}
  \label{fig:pipeline_compare}
  \vspace{-10pt}
\end{figure}

\begin{figure*}[t]
    \vspace{-8pt}
    \centering
    \includegraphics[width=\textwidth]{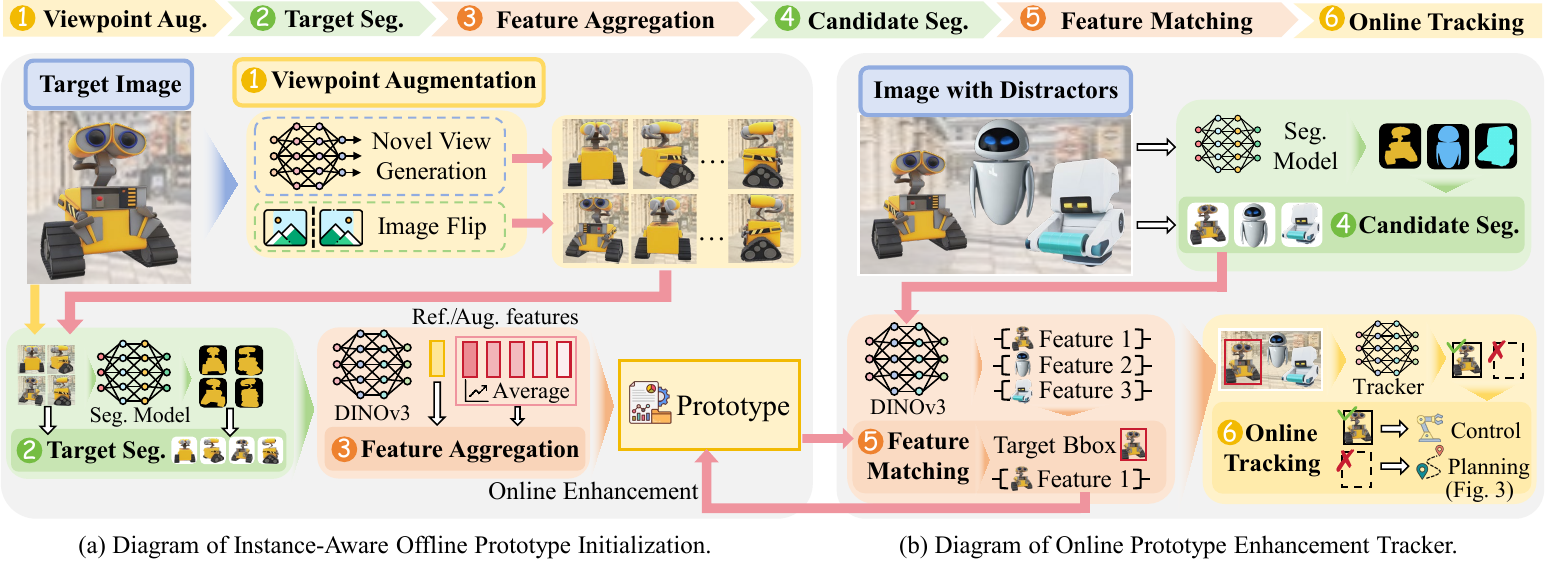}
    \caption{Overview of Occlusion-Aware-VAT (\methodname). (a) Given a reference image, \methodname first constructs an instance-aware prototype offline by aggregating features from viewpoint augmentations. (b) At runtime, \methodname first detects the target via matching prototype and then tracks it online with prototype enhancement. When tracking fails, it activates the planning module (Fig.~\ref{fig:planning}(b)) to recover target tracking.}
    \vspace{-12pt}
    \label{fig:prototype_apt}
\end{figure*}

\section{Introduction}
\label{sec:intro}

Visual Active Tracking (VAT) aims to dynamically control cameras to follow a specific target instance in 3D space \cite{dimp, sarl,advat,advat+,ts,rspt,evt,trackvla,dvat,gcvat}. It is widely used in real-world applications such as navigation \cite{liu2023aerialvln,wang2024realisticuavvisionlanguagenavigation,fan-etal-2023-aerial, MIR-2024-09-421} and security surveillance \cite{emran2018review, XING2022102972, ZHANG2023119243, Schedl2021AnAD}. Unlike passive visual tracking  \cite{babenko2009visual,ostrack, SiamRPN0,SORT,Chen_2023_CVPR,9573394,wen2021detection,smeulders2013visual} that operates on pre-recorded videos, VAT requires the agent to move the camera based on real-time visual observations. Passive visual tracking often falls short in the real world due to the dynamic nature of most targets. Thus, VAT offers a more practical yet challenging solution for real-world tracking applications.

Existing VAT approaches are broadly categorized into reinforcement learning (RL)-based \cite{sarl,advat+,evt,dvat,gcvat} and pipeline methods \cite{dimp,fan,rspt,trackvla}. RL-based methods learn end-to-end policies that map pixels to actions, enabling low-latency control without intermediate modules. However, they often struggle with sparse reward signals, leading to poor convergence in complex environments. Additionally, RL-based methods rely on simulation \cite{unrealcv,cvat,dvat,gcvat} for policy training, which limits their deployment due to the sim-to-real gap. In contrast, pipeline methods decouple tracking into separate perception and control stages. These methods use pre-trained visual models \cite{siamrpn,SiamFC,SiamMask,SiamRPN0,dinov2,yoloworld} for perception, enabling strong generalization to unseen environments. Consequently, pipeline VAT methods offer a more deployable alternative under real-world constraints.

Unfortunately, existing pipeline methods remain challenging in real scenarios, partly for the following reasons (Fig.~\ref{fig:pipeline_compare}). \textbf{1) Lack of instance-level tracking ability.} Real-world tracking often involves multiple similar distractors. However, most existing VAT methods \cite{dimp,fan,dvat,evt,gcvat} operate at the \textbf{category level} and struggle to distinguish specific target instances. \textbf{2) Missing active occlusion handling.} Most pipeline VAT methods \cite{fan,fasttracker2,das2018stable} employ simple controllers (\textit{e.g.}, PID \cite{pid}) that only center the target in the image, without considering occlusions. While simplifying system design, such controllers cannot navigate trackers around obstacles to recover the target, often causing failure.

To address these, we propose a pipeline VAT method called \methodname (see Fig.~\ref{fig:prototype_apt}), which enables instance-level discrimination and active recovery of occluded targets. First, we introduce an \textbf{Instance-Aware Offline Prototype Initialization} module. It extracts discriminative prototype features from the target's reference image. Second, we propose an \textbf{Online Prototype Enhancement Tracker} that tracks the target by matching the prototype to current frames.
It uses online prototype enhancement to handle target appearance changes and a confidence-aware Kalman filter for motion prediction.
Third, we develop an \textbf{Occlusion-Aware Trajectory Planner} to recover tracking during occlusions. Trained on our new \texttt{Planning-20k} dataset, it actively guides the camera around obstacles to restore target visibility. Our main contributions are as follows:

\begin{itemize}
    \item \textbf{A Robust Instance-Level Tracker.} We propose a training-free tracker that initializes a discriminative prototype from a reference image, enhances it online, and integrates a confidence-aware Kalman filter for robust tracking under appearance and motion changes.
    
    \item \textbf{An Active Occlusion-Aware Planner.} We develop a path planner trained on our new \texttt{Planning-20k} dataset that plans collision-free trajectories to recover occluded targets and generalizes to arbitrary unseen targets.
    \item \textbf{Extensive Validation.} Experiments on simulators, real-world images, and a drone demonstrate \methodname's state-of-the-art performance and real-time inference. 
\end{itemize}

\section{Task Definition}
\label{sec:task_definition}
Visual Active Tracking (VAT) aims to dynamically control a camera to follow a specific target instance in 3D space. The tracking agent must cope with challenges such as distractors, occlusions, and highly dynamic environments. We formulate VAT as a continuous visual control problem, characterized by the observation space, action space, and success criterion described below, and solved by a VAT tracker.

\textbf{Observation space $\mathcal{O}$.} At each time step, the agent receives an RGB image $\mathcal{I}_t$ (\textit{e.g.}, $160 \times 120$ pixels). Besides, a single reference image $\mathcal{I}_{ref}$, depicting the target instance to be tracked, is provided at the beginning of the episode.

\textbf{Action space $\mathcal{A}$.} The agent operates in a continuous action space $\mathcal{A} \in \mathbb{R}^4$, with $a_t=[v_f,v_l,v_v,\omega_y]^T$ representing linear velocities (forward, lateral, vertical) and yaw rotation.

\textbf{VAT Tracker.}  
We define a VAT tracker as an embodied agent parameterized by a policy $\pi_\theta$, which maps visual observations to continuous control actions, i.e.,
\begin{equation}
    a_t = \pi_\theta(\mathcal{I}_t, \mathcal{I}_{ref}).
    \label{eq:vat_tracker}
\end{equation}

\textbf{Success criterion.} We define a success criterion when the tracker keep the target instance centered in the image for a long duration. Metrics are detailed in Sec.~\ref{sec:exp_setting}.

\section{Handling Distractors and Occlusions in VAT}
\label{sec:method}

Visual active tracking in the real world is extremely challenging, primarily due to the prevalence of distractors and frequent occlusions. To address these challenges, we propose Occlusion-Aware VAT (\textbf{\methodname}), an instance-level tracking method with occlusion awareness, aiming to improve visual active tracking in complex real-world applications. As shown in Fig.~\ref{fig:prototype_apt}, \methodname incorporates a training-free instance tracking module that distinguishes the target from distractors. Moreover, we train a planning policy capable of recovering occluded targets. (see Fig.~\ref{fig:planning}).

\subsection{Instance-Aware Offline Prototype Initialization}
\label{sec:prototype_init}

We seek to build a training-free extraction module to obtain discriminative prototype features of the target instance. While existing visual foundation models (\textit{e.g.}, Grounding-DINO \cite{groundingdino}, SAM \cite{sam1}, DINOv3 \cite{dinov3}) excel at category-level detection and segmentation, they struggle to provide features that are sufficiently discriminative at the instance level. Consequently, directly representing the target with their raw features or masks often leads to confusion with distractors. To address this, we propose an offline module that extracts a robust prototype from foundation model outputs without any additional training. Furthermore, we provide theoretical analysis for its effectiveness in Sec.~\ref{sec:theory_analysis}.

Given a reference image $\mathcal{I}_{ref}$ of the target, we apply viewpoint augmentations and construct a prototype that captures view-invariant features for robust tracking. Specifically, we augment $\mathcal{I}_{ref}$ with horizontal and vertical flips to increase appearance diversity, producing an augmented image set $\{\mathcal{I}_i\}_i$. For human targets, whose appearance varies significantly across viewpoints, we generate additional views using an off-the-shelf diffusion model \cite{yuan2025poses} and augment them with the same flips. We then employ a segmentation model $\texttt{Seg}(\cdot)$ (we use YOLO-E \cite{yoloe} for its speed-performance balance) to obtain the target mask and crop the instance in the reference and augmented images. This produces a set of target crops $\tilde{\mathbf{I}} = \{\tilde{\mathcal{I}}_{ref},\tilde{\mathcal{I}}_1, ..., \tilde{\mathcal{I}}_N\}$, where $\tilde{\mathcal{I}}_{i}$ denotes the target crop of $\mathcal{I}_{i}$.

Subsequently, we employ a feature descriptor $\texttt{Desc}(\cdot)$, specifically DINOv3 \cite{dinov3} followed by global average pooling to extract features from the set $\tilde{\mathbf{I}}$:
\begin{equation}
    \mathbf{f}_{ref} = \texttt{Desc}(\tilde{\mathcal{I}}_{ref}), \  \mathbf{f}_i = \texttt{Desc}(\tilde{\mathcal{I}}_i), \ i=1,\ldots,N,
    \label{eq:dino_feature}
\end{equation}
where $\mathbf{f}_{ref}$ and $\mathbf{f}_i$ are the feature vectors for the reference and the $i$-th augmented image, respectively. The initial visual prototype $\tilde{\mathbf{f}}$ is obtained by averaging the original feature with the mean feature of the augmented set as follows:
\begin{equation}
    \tilde{\mathbf{f}} = \frac{ \mathbf{f}_{ref} + \frac{1}{N} \sum_{i=1}^N \mathbf{f}_i }{\lVert \mathbf{f}_{ref} + \frac{1}{N} \sum_{i=1}^N \mathbf{f}_i \rVert_2 }.
    \label{eq:feature_aggregate}
\end{equation}
The complete pipeline for the instance-aware offline prototype initialization module is shown in Fig.~\ref{fig:prototype_apt}(a).

\begin{algorithm}[t]
\caption{Online Prototype Enhancement Tracker}
\label{alg:online_tracking}
\begin{algorithmic}[1]
\Require Frame $\mathcal{I}_t$, prototype $\tilde{\mathbf{f}'}$, bounding box $\mathbf{b}_{t}$, seg. model $\texttt{Seg}(\cdot)$, feature descriptor $\texttt{Desc}(\cdot)$, similarity threshold $\eta_s$, confidence threshold $\eta_c$, tracker $\mathcal{T}$
\Ensure Updated prototype $\tilde{\mathbf{f}}'$, predicted box $\mathbf{b}_{t+1}$

\If{$\mathbf{b}_{t}=\varnothing$} 
\State $\%$ \textit{Detection via Prototype Matching}
\State Obtain $M$ candidate masks: $\{M^i\}_{i=1}^M \gets \texttt{Seg}(\mathcal{I}_t)$
\State Crop candidates: $\mathcal{I}_{\text{cand}}^i \gets \mathcal{I}_t \odot M^i, i \in \{1,\dots,M\}$
\State Extract features: $\mathbf{f}_{\text{cand}}^i \gets \texttt{Desc}(\mathcal{I}_{\text{cand}}^i)$

\State Compute cosine similarities $S$ via Eq.~\eqref{eq:similarity}

\State Find best candidate: $i^* \gets \arg\max_i S_i$
\State $\mathbf{b}_{t+1}\gets \varnothing$
\If{$S_{i^*} > \eta_s$}

\State $\mathbf{b}_{t+1}\gets$ bounding box of $\mathcal{I}_{\text{cand}}^{i^*}$
\EndIf

\Else 
\State $\%$ \textit{Online Tracking with Prototype Enhancement}
    \State Update tracker $\mathcal{T}$ with $\mathcal{I}_t$ and $\mathbf{b}_t$
    \State Get confidence $c_t$ and bounding box $\mathbf{z}_t$ from $\mathcal{T}$
    \State $\%$ \textit{Confidence-Aware Kalman Filter}
    \State Predict $\hat{\mathbf{x}}_{t|t-1}$ via Eq.~\eqref{eq:kf_state_update} and $\mathbf{b}_{t+1} \gets \mathbf{H}\hat{\mathbf{x}}_{t|t-1}$
    \If{$c_t<\eta_c$}
    \State $\mathbf{z}_t\gets\varnothing$
    \Else
        \State Update state $\hat{\mathbf{x}}_{t|t}$ via Eq.~\eqref{eq:kf_predict}
        \State Get predicted bounding box $\mathbf{b}_{t+1}\gets \mathbf{H}\hat{\mathbf{x}}_{t|t}$
        \State Extract normalized target feature $\hat{\mathbf{f}}_{\text{tar}}$
        \State Enhance Prototype $\tilde{\mathbf{f}'}$  via Eq.~\eqref{eq:ema}
\EndIf
\EndIf

\State \Return $\tilde{\mathbf{f}}'$, $\mathbf{b}_{t+1}$
\end{algorithmic}
\end{algorithm}

\subsection{Online Prototype Enhancement Tracker}
\label{sec:apt}

In the VAT setting, no initial bounding box is given, and the target’s appearance and motion vary greatly during tracking. To address this, we first detect the target by matching the initialized prototype to the current frame. During tracking, we enhance the prototype online using new frames and predict the target’s future motion to maintain robustness.

\begin{figure*}[t]
    \vspace{-8pt}
    \centering
    \includegraphics[width=\textwidth]{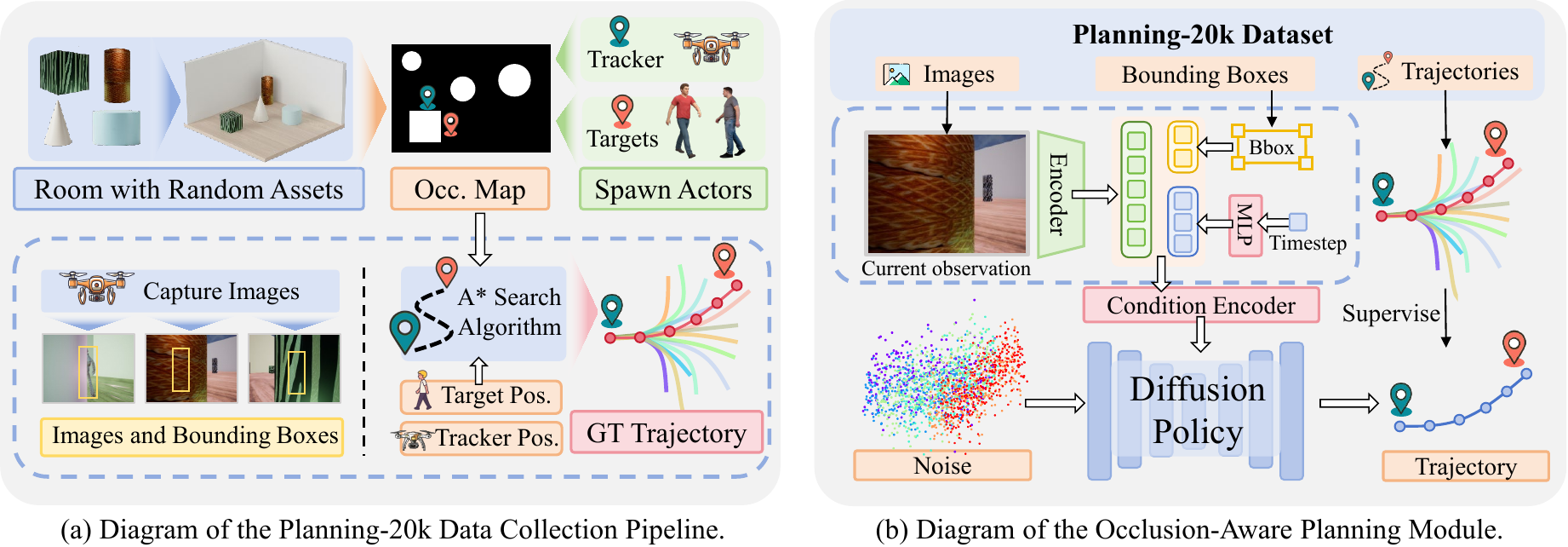}
    \caption{Overview of the proposed Occlusion-Aware Trajectory Planner. The planner denoises a random trajectory into a feasible recovery path conditioned on image observations and a predicted target bounding box, enabling target-agnostic trajectory planning.}
    \vspace{-12pt}
    \label{fig:planning}
\end{figure*}

\textbf{Online Visual Prototype Enhancement}. Our enhancement strategy operates based on the current tracking state. If the tracker is uninitialized (i.e., the current bounding box $\mathbf{b}_{t}=\varnothing$), we start a target detection procedure. For each incoming observation frame $\mathcal{I}_t$, we first employ the segmentation model $\texttt{Seg}(\cdot)$ to extract $M$ candidate object masks belonging to the target category. These candidates are then cropped from the frame, producing a set of image crops $\mathcal{I}_{\text{cand}} = \{\mathcal{I}_{\text{cand}}^1, ..., \mathcal{I}_{\text{cand}}^M\}$. Each crop is then processed by $\texttt{Desc}(\cdot)$ to obtain feature $\mathbf{f}_{\text{cand}}^i$ for the $i$-th candidate.

We then compute the cosine similarity between current visual prototype $\tilde{\mathbf{f}'}$ and each candidate feature $\mathbf{f}_{\text{cand}}^i$ as:
\begin{equation}
    S(\tilde{\mathbf{f}}', \mathbf{f}_{\text{cand}}^i) = \frac{\tilde{\mathbf{f}}' \cdot \mathbf{f}_{\text{cand}}^i}{\lVert\tilde{\mathbf{f}'}\rVert_2 \lVert\mathbf{f}_{\text{cand}}^i\rVert_2},
    \label{eq:similarity}
\end{equation}
where $\tilde{\mathbf{f}}'$ is initialized as $\tilde{\mathbf{f}}$.
The candidate with the highest similarity that exceeds a predefined threshold $\eta_s$ is identified as the target instance. Once a candidate is matched, its bounding box and the current frame $\mathcal{I}_t$ are used to initialize an object tracker \cite{wu2025ortrack} $\mathcal{T}$ for precise localization. 

During tracking, the tracker captures the target from varying viewpoints. These frames provide online features for enhancing the visual prototype. Specifically, we extract the normalized target feature $\hat{\mathbf{f}}_{\text{tar}}$ via $\texttt{Desc}(\cdot)$, and update the prototype via an exponential moving average (EMA):
\begin{equation}
    \tilde{\mathbf{f}}' \gets \beta \tilde{\mathbf{f}}' + (1-\beta) \hat{\mathbf{f}}_{\text{tar}},
    \label{eq:ema}
\end{equation}
where $\beta$ is the EMA momentum. To avoid slowing down the online tracking, the prototype is updated in a separate thread. This strategy ensures the prototype continuously adapts to the appearance variations, maintaining discriminability over long tracking sequences.

\textbf{Confidence-Aware Kalman Filter.} To handle rapidly changing target motions, we design a confidence-aware Kalman filter that adapts the filter parameters based on the tracker's confidence to ensure robust motion prediction. We define the state vector as $\mathbf{x}_t = [x, y, w, h, \dot{x}, \dot{y}, \dot{w}, \dot{h}]^T$, representing the bounding box and its derivatives. The standard Kalman filter \cite{kalman1960new} operates via a predict-update cycle, where the state prediction $\hat{\mathbf{x}}_{t\mid t-1}$ is given by: 
\begin{equation}
    \hat{\mathbf{x}}_{t\mid t-1}=\mathbf{F}\hat{\mathbf{x}}_{t-1\mid t-1},
    \label{eq:kf_state_update}
\end{equation}
where $\mathbf{F}$ is the state transition matrix. Given the current confidence $c_t$ and bounding box $\mathbf{z}_t$ from tracker $\mathcal{T}$, the state is updated when $c_t$ exceeds the threshold $\eta_c$:
\begin{equation}
    \hat{\mathbf{x}}_{t \mid t} = \hat{\mathbf{x}}_{t \mid t-1} + \mathbf{K}_t \left( \mathbf{z}_t - \mathbf{H} \hat{\mathbf{x}}_{t \mid t-1} \right),
    \label{eq:kf_predict}
\end{equation}
where $\mathbf{H}$ is the observation matrix and $\mathbf{K}_t$ is the Kalman gain, given by:
\begin{equation}
    \mathbf{K}_t = \mathbf{P}_{t \mid t-1} \mathbf{H}^\top \left( \mathbf{H} \mathbf{P}_{t \mid t-1} \mathbf{H}^\top + \mathbf{R}_t \right)^{-1},
    \label{eq:kf_gain}
\end{equation}
where $\mathbf{P}$ denotes the state covariance, and $\mathbf{R}_t$ is the measurement noise covariance. When the noise is large, the uncertainty in $\mathbf{z}_t$ increases, leading to a smaller $\mathbf{K}_t$, which assigns less weight to the current observation. The predicted bounding box can be obtained from $\hat{\mathbf{x}}_{t|t}$: $\mathbf{b}_{t+1}\gets \mathbf{H}\hat{\mathbf{x}}_{t|t}$.

Our confidence-aware Kalman filter adapts the noise $\mathbf{R}_t$ according to the confidence $c_t$. We hypothesize that $c_t$ reflects the uncertainty of the current observation. Thus, the noise variance $\sigma^2(\cdot)$ is modeled as a function of $c_t$:
\begin{equation}
    \mathbf{R}_t = \sigma^2(c_t) \mathbf{I}, \quad \sigma^2(c_t) = \frac{1}{1 + e^{\lambda \cdot (c_t - \gamma)}},
    \label{eq:cakf}
\end{equation}
where $\lambda$ and $\gamma$ are hyperparameters. This sigmoid mapping reduces $\sigma^2$ and increases trust in the observation when $c_t$ exceeds $\gamma$, while enlarging $\sigma^2$ when $c_t$ is low. Consequently, at low confidence, the Kalman gain $\mathbf{K}_t$ is suppressed, making the filter rely more on its internal state prediction than on the noisy observation. This mechanism also allows the filter to continue predicting the target motion as in Eq.~\eqref{eq:kf_state_update} during tracking failures, increasing the likelihood of re-acquiring the target. If no $\mathbf{z}_t$ is available for many consecutive frames, the planning module is triggered (see Sec.~\ref{sec:plan}). The pseudocode is shown in Algorithm~\ref{alg:online_tracking}.

\subsection{Occlusion-Aware Trajectory Planner}\label{sec:plan}

Another key challenge in open-world active tracking is frequent occlusion. Existing pipeline methods often use PID controllers \cite{pid} that steer the tracker with the image-plane distance between the target and the center. While effective for direct pursuit, these methods fail under occlusion, as they cannot plan trajectories to navigate around obstacles. 

To address this, we propose a planning module that generates recovery trajectories via imitation learning. Since expert demonstrations under occlusion are unavailable, we introduce $\texttt{Planning-20k}$, a synthetic dataset collected in UnrealCV~\cite{unrealcv}. Using this data, we train a diffusion model to predict tracker trajectories conditioned on visual observations and the target bounding box. This design makes the policy inherently \textit{target-agnostic}, enabling zero-shot generalization to arbitrary unseen targets.

\textbf{Data Collection.} The design of our \texttt{Planning-20k} dataset is central to learning a target-agnostic occlusion-recovery policy. As illustrated in Fig.~\ref{fig:planning}(a), we generate data in UnrealCV’s \texttt{SimpleRoom} environment as follows.

We first construct each map by randomly placing obstacles in an empty room and building a 2D occupancy map. To simulate occlusions, we randomly select an obstacle and spawn the target on one of its bounding box edges $e_i$. The tracker is then placed on an adjacent edge to $e_i$ to capture an RGB image and the target bounding box. Samples where the target is fully visible are discarded, ensuring only non-trivial planning scenarios. To enhance visual diversity, we apply domain randomization to illumination and obstacle textures using a texture dataset~\cite{texture}. Expert trajectories are then obtained via A* search algorithm \cite{a*} on the occupancy map. The final dataset contains 20k samples, including 8k with default textures and 12k with randomized ones. Each sample consists of an RGB image, a target bounding box, and the corresponding expert trajectory.

{\bf Data Diversity.} Our \texttt{Planning-20k} covers common occlusion structures in real-world scenarios. \textbf{(1)} Single-side occlusion blocks one side of the view, requiring the planner to go around it. \textbf{(2)} Double-side occlusion involves occlusions on both sides, leaving only a central gap for precise navigation. \textbf{(3)} Corridor-type occlusion includes obstacles on the left, right, and rear side, mimicking entry into a narrow hallway from a room.

\textbf{Occlusion-Aware Planning Module.} To achieve robust trajectory planning under occlusions, we propose a planning module inspired by Diffusion Policy \cite{chi2024diffusionpolicy}. We formulate the trajectory planning problem as a conditional denoising diffusion process. Previous diffusion policies \cite{chi2024diffusionpolicy} conditioned solely on visual observations often overfit the visual appearance of specific targets, limiting their generalization when the target changes. \textit{In contrast}, \methodname incorporates target bounding boxes as an explicit condition. This guides the model to learn \textit{target-agnostic} trajectory planning, thereby enabling robust generalization to unseen targets.

Formally, let $\mathbf{A}_t$ denotes trajectory points over a horizon $T_p$. We model the conditional distribution $p(\mathbf{A}_t | \mathcal{I}_t, \mathbf{b}_t)$ where $\mathcal{I}_t$ is the image and $\mathbf{b}_t$ is the target bounding box. The diffusion process starts from noise $\mathbf{A}^K_t \sim \mathcal{N}(0, \mathbf{I})$ and iteratively denoises for $K$ steps via:
\begin{equation}
    \mathbf{A}^{k-1}_t = \alpha \left( \mathbf{A}^k_t - \phi \epsilon_\theta(\mathcal{I}_t, \mathbf{b}_t, \mathbf{A}^k_t, k) \right) + \mathcal{N}(0, \sigma^2 \mathbf{I}),
    \label{eq:dp_denoise}
\end{equation}
where $\epsilon_\theta$ is a noise prediction network \cite{unet} that estimates the gradient of the action score function, and $\alpha, \phi, \sigma$ are noise scheduling function of iteration $k$~\cite{nichol2021improved}. We use the mean squared error (MSE) to define the training objective:
\begin{equation}
    \mathcal{L} = \mathbb{E}_{k, \mathbf{A}^0_t, \bm{\varepsilon}} \left[ \| \bm{\varepsilon} - \epsilon_\theta(\mathcal{I}_t, \mathbf{b}_t, \mathbf{A}^0_t + \bm{\varepsilon}, k) \|^2 \right],
    \label{eq:dp_loss}
\end{equation}
where $\epsilon$ is the random noise. $\mathcal{L}$ encourages the network to reconstruct the noise added to the ground-truth trajectory.

During online tracking, when the tracker (in Sec.~\ref{sec:apt}) becomes unreliable, \textit{i.e.}, $c_t < \eta_c$, we employ the confidence-aware Kalman filter in Eq.~\eqref{eq:kf_state_update} to predict the bounding box $\mathbf{b}_t$ of the occluded target. $\mathbf{b}_t$ is then used as the conditioning input to our planner, generating a recovery trajectory. 

{\bf Why our planner generalizes well.} Our planner achieves robust generalization by modeling occlusions with physical rules. Specifically, it infers navigable pathways from spatial relationships between targets and obstacles. Moreover, we use bounding boxes as conditions, ensuring the planner focuses on target locations rather than visual textures. This captures general physical rules, and avoids reliance on large-scale photorealistic data.

\subsection{Theoretical Guarantees on Instance Prototype} 
\label{sec:theory_analysis}
For any target instance $T_k$, we define its true feature manifold $M_k$ as the set of all features extracted by $\texttt{Desc}(\cdot)$ under arbitrary imaging conditions. From this manifold, the normalized reference features of $T_k$ are obtained via Eq.~\eqref{eq:dino_feature} and denoted by the set $F^*_k$. For each reference feature $f^*_k \in F^*_k$, we generate a corresponding set of multi-view augmented features $\{f_{k,i}\}_{i=1}^N$. The instance-aware prototypes are then derived via Eq.~\eqref{eq:feature_aggregate} and form the set $\hat{F}_k$. For any manifold $M$, $\mathbb{E}_{g \sim M}[\cdot]$ denotes expectation over $M$.

\begin{prop}
\label{prop:prototype}
Under the assumptions that (i) multi-view augmented features $\{f_{k,i}\}_{i=1}^N$ better cover the true feature manifold $M_k$ than the reference feature $f^*_k$ and (ii) features from the same target are cohesive while those from different targets are well-separated, for any two distinct targets $T_k \neq T_j$, the minimum squared distance between any pair of $(\hat{f}_k, \hat{f}_j)$ sampled from $\hat{F}_k$ and $\hat{F}_j$ is larger than that between any pair of $(f^*_k, f^*_j)$ sampled from $F^*_k$ and $F^*_j$:
\begin{equation}
\min_{\hat{f}_k \in \hat{F}_k, \hat{f}_j \in \hat{F}_j} \|\hat{f}_k - \hat{f}_j\|_2^2 \geq \min_{f^*_k \in F^*_k, f^*_j \in F^*_j} \|f^*_k - f^*_j\|_2^2.
\end{equation}
\end{prop}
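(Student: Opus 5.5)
We will make the two informal assumptions quantitative and then compare distances through the manifold means. Let $\mu_k=\mathbb{E}_{g\sim M_k}[g]$ denote the centroid of the true feature manifold of $T_k$. All features are unit-normalized, so $\|f-g\|_2^2 = 2-2\langle f,g\rangle$. Maximizing the minimum squared distance is therefore the same as minimizing the maximum cross-target inner product. We formalize assumption (i) as follows: for every reference $f^*_k$, the pre-normalization aggregate $u_k=\tfrac12\bigl(f^*_k+\tfrac1N\sum_i f_{k,i}\bigr)$ is closer to $\mu_k$ than $f^*_k$ is. Concretely, we take $u_k=\mu_k+\delta_k$ and $f^*_k=\mu_k+\epsilon_k$ with $\|\delta_k\|\le\|\epsilon_k\|$. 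Averaging helps here because the augmented mean $\tfrac1N\sum_i f_{k,i}$ is a better estimate of $\mu_k$, and convexity of $\|\cdot-\mu_k\|$ gives $\|u_k-\mu_k\|\le\tfrac12(\|\epsilon_k\|+\|\bar f_k-\mu_k\|)\le\|\epsilon_k\|$. We formalize assumption (ii) as cohesion, meaning every $f\in M_k$ has $\langle f,\mu_k\rangle$ large, and separation, meaning $\langle \mu_k,\mu_j\rangle$ is small, say $\langle\mu_k,\mu_j\rangle\le 0$ or below some margin.

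The first key step is to write the cross inner product for a pair $(f^*_k,f^*_j)$ as
\begin{equation*}
\langle f^*_k,f^*_j\rangle=\langle\mu_k,\mu_j\rangle+\langle\mu_k,\epsilon_j\rangle+\langle\epsilon_k,\mu_j\rangle+\langle\epsilon_k,\epsilon_j\rangle.
\end{equation*}
We write the analogous expression for $\hat f_k=u_k/\|u_k\|$. Under separation, the perturbation terms are what drive the inner product up, so the closest pair in the raw sets is governed by the worst-case perturbations $\epsilon$. For the prototypes, these perturbations are replaced by the smaller $\delta$, which yields
\begin{equation*}
\max_{\hat F_k,\hat F_j}\langle \hat f_k,\hat f_j\rangle\le \max_{F^*_k,F^*_j}\langle f^*_k,f^*_j\rangle.
\end{equation*}
Translating back through $2-2\langle\cdot,\cdot\rangle$ gives the claimed inequality between the minima.

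The second step handles the renormalization. Averaging shrinks the norm, $\|u_k\|\le1$, and dividing by $\|u_k\|$ can inflate the inner products. We will control this with cohesion: all features lie near $\mu_k$, so $\|u_k\|\ge\langle u_k,\mu_k\rangle$ is close to $1$. It then suffices to show that the separation margin dominates the resulting inflation factor $1/(\|u_k\|\|u_j\|)$. If needed, we will strengthen assumption (ii) to exactly the statement that makes this go through, namely
\begin{equation*}
\langle u_k,u_j\rangle\le \|u_k\|\|u_j\|\max\langle f^*_k,f^*_j\rangle.
\end{equation*}

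The main obstacle is that the proposition is really only as strong as the formalized assumptions. A pairwise comparison must hold against the worst pair on the right-hand side, while the left-hand side minimum ranges over pairs built from different references. We therefore expect to argue via a pairing: for each prototype pair, take the raw pair $(f^*_k,f^*_j)$ it was built from, and show the prototype pair is at least as far apart. Since the minimum of the larger quantity is at least the minimum of the smaller, the minimum inequality then follows immediately. The pairwise inequality itself is the delicate step. We will first try a bound in which $\langle\mu_k,\mu_j\rangle\le0$ makes the cross terms monotone in the perturbation size. If that fails, we will add an explicit assumption that the perturbations $\delta_k$ are aligned with, and no larger than, $\epsilon_k$.
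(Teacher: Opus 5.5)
\textbf{There is a genuine gap, and the statement cannot be proved from (i)--(ii) as you formalize them.} Your reduction to inner products is sound, and so is the pairing step (prove the inequality pair by pair, then take minima). The problem is that the pairwise inequality you need is false. The step meant to deliver it, ``perturbations replaced by the smaller $\delta$'', does not follow. The bound $\|\delta_k\|_2\le\|\epsilon_k\|_2$, even with $\langle\mu_k,\mu_j\rangle\le 0$, gives no control over the \emph{signed} cross terms. $\langle\mu_k,\epsilon_j\rangle$ and $\langle\epsilon_k,\mu_j\rangle$ are negative when a reference sits on the far side of its manifold from the other target. Pulling such a reference toward its centroid then \emph{raises} the cross inner product.

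Here is a concrete counterexample. In $\mathbb{R}^2$ write $e(\theta)=(\cos\theta,\sin\theta)$ and take uniform measures on $M_k=\{e(\theta):0^\circ\le\theta\le 20^\circ\}$ and $M_j=\{e(\theta):100^\circ\le\theta\le 120^\circ\}$. Within-target cosines are at least $\cos 20^\circ$, cross-target cosines are at most $\eta=\cos 80^\circ$, and $\langle\mu_k,\mu_j\rangle<0$. Let $F^*_k=\{e(0^\circ)\}$ and $F^*_j=\{e(120^\circ)\}$. Place augmented features at $10^\circ\pm x$ (resp.\ $110^\circ\pm x$) with $\cos x=\|\mu_k\|_2$, so that $\bar f_k=\mu_k$ and $\bar f_j=\mu_j$.

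This example satisfies every hypothesis on the table:
\begin{itemize}
\item your coverage condition holds, and so does the paper's Assumption~1;
\item $\delta_k=\tfrac12\epsilon_k$ and $\delta_j=\tfrac12\epsilon_j$, so your ``aligned and no larger'' fallback holds too.
\end{itemize}
Yet the prototypes lie at about $5^\circ$ and $115^\circ$, giving $\|\hat f_k-\hat f_j\|_2^2\approx 2-2\cos 110^\circ\approx 2.68<3=\|f^*_k-f^*_j\|_2^2$.

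Your last fallback does not rescue the argument either. Dividing $\langle u_k,u_j\rangle\le\|u_k\|_2\|u_j\|_2\max\langle f^*_k,f^*_j\rangle$ by $\|u_k\|_2\|u_j\|_2$ gives exactly $\max\langle\hat f_k,\hat f_j\rangle\le\max\langle f^*_k,f^*_j\rangle$. That is the proposition itself, so adopting it as an assumption is circular. What is missing is a directional hypothesis: how each reference deviates from its centroid \emph{relative to the other target}. The example shows that no condition on the size or alignment of the deviations can substitute for it.

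The paper does not supply this either. Its proof follows your outline: it reduces to maximal inner products, then uses Lemmas~1--2. Those lemmas rely on Jensen and the bound $\sqrt{1+3\delta}\le\|f^*_k+F_{avg,k}\|_2\le 2$, with $\delta$ the cohesion constant. They only show that $\hat f_k$ is on average closer to its \emph{own} manifold than $f^*_k$ is. The paper then asserts a constant $\epsilon>0$ with $\max\hat f_k\cdot\hat f_j\le\eta+\epsilon\le\max f^*_k\cdot f^*_j$. That step is not derived. Moreover, since the reference features come from $M_k$ and $M_j$, Assumption~2 already forces $\max f^*_k\cdot f^*_j\le\eta$, so the chain cannot hold with $\epsilon>0$. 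You located the crux correctly, but neither argument closes it.
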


\textit{For the proof of Proposition~\ref{prop:prototype}, please refer to the Appendix.} 
Proposition~\ref{prop:prototype} shows that the instance-aware offline prototype initialization module (Sec.~\ref{sec:prototype_init}) improves inter-instance separation by aggregating multi-view features, producing prototypes more discriminative than the original foundation model features, as validated by Fig.~\ref{fig:ablation_prototype_init}.

\begin{table*}[t]
    \small
    \centering
        \caption{Results in UnrealCV environments containing \textbf{distractors}. \textbf{Bold} represents the best while \underline{underline} represents the second.  TrackVLA~\cite{trackvla} does not report $AR$ and its training code and checkpoints are not publicly available, so we cannot reproduce this metric.}
\renewcommand{\arraystretch}{0.98}
\renewcommand{\tabcolsep}{3.5pt}
    \begin{tabular}{l|c|ccc|ccc|ccc|ccc|r}
        \toprule
        \multirow{2}{*}{Tracker} & \multirow{2}{*}{Publication} & \multicolumn{3}{c|}{Parking Lot (2D)} & \multicolumn{3}{c|}{UrbanCity (4D)} & \multicolumn{3}{c|}{ComplexRoom (4D)} & \multicolumn{3}{c|}{Average} & \multirow{2}{*}{Params.}\\ 
         & & $AR\uparrow$ & $EL\uparrow$ & $SR\uparrow$ & $AR\uparrow$ & $EL\uparrow$ & $SR\uparrow$ & $AR\uparrow$ & $EL\uparrow$ & $SR\uparrow$ & $AR\uparrow$ & $EL\uparrow$ & $SR\uparrow$ \\ \midrule
         DiMP \cite{dimp} & ICCV 2019 & 111 & 271 & 0.24 & 170 & 348 & 0.32 & 97 & 307 & 0.26 & 126 & 309 & 0.27 & $26M$ \\
         SARL \cite{sarl} & TPAMI 2019 & 53 & 237 & 0.12 & 74 & 221 & 0.16 & 22 & 263 & 0.15 & 50 & 240 & 0.14 & $2M$ \\
         AD-VAT \cite{advat} & ICLR 2019 & 43 & 232 & 0.13 & 32 & 204 & 0.06 & 16 & 223 & 0.16 & 30 & 220 & 0.12 & $4M$ \\
         AD-VAT+ \cite{advat+} & TPAMI 2019 & 35 & 166 & 0.08 & 89 & 245 & 0.11 & 35 & 262 & 0.18 & 53 & 224 & 0.12 & $4M$ \\
         TS \cite{ts} & ICML 2021 & 186 & 331 & 0.39 & 227 & 381 & 0.51 & 250 & 401 & 0.54 & 221 & 371 & 0.48 & $7M$ \\
         EVT \cite{evt} & ECCV 2024 & \underline{192} & 425 & 0.63 & \underline{272} & 472 & \underline{0.92} & \underline{354} & \underline{479} & 0.88 & \underline{273} & 459 & 0.81 & $748M$\\
         FAn \cite{fan} & RAL 2024 & 126 & 301 & 0.28 & 167 & 334 & 0.30 & 189 & 351 & 0.29 & 161 & 329 & 0.29 & $132M$\\
         FAn+SAM2 \cite{sam2} & ICLR 2025 & 170 & 349 & 0.40 & 201 & 407 & 0.55 & 262 & 422 & 0.61 & 211 & 393 & 0.52 & $122M$\\
         TrackVLA \cite{trackvla} & CoRL 2025 & - & \underline{467} & \underline{0.90} & - & \underline{476} & \underline{0.92} & - & \underline{479} & \underline{0.91} & - & \underline{474} & \underline{0.91} & $>7B$\\
         \rowcolor{aliceblue} \textbf{Ours} & CVPR 2026 & \textbf{392} & \textbf{482} & \textbf{0.93} & \textbf{385} & \textbf{486} & \textbf{0.95} & \textbf{392} & \textbf{481} & \textbf{0.92} & \textbf{390} & \textbf{483} & \textbf{0.93} & $584M$ \\
         \bottomrule
    \end{tabular}
    \label{tab:unrealcv_hard}
\end{table*}

\begin{table*}[t]
    \small
    \centering
    \begin{minipage}{.70\textwidth}
    \centering
        \caption{Performance on DAT benchmark under within scene setting.}
\renewcommand{\arraystretch}{0.98}
\renewcommand{\tabcolsep}{2.6pt}
    \begin{tabular}{l|c|ll|ll|ll}
        \toprule
        \multirow{2}{*}{Tracker} & \multirow{2}{*}{Publication} & \multicolumn{2}{c|}{citystreet} & \multicolumn{2}{c|}{desert} & \multicolumn{2}{c}{village} \\ 
         & & $CR\uparrow$ & $TSR\uparrow$ & $CR\uparrow$ & $TSR\uparrow$  & $CR\uparrow$ & $TSR\uparrow$ \\ \midrule
         SARL \cite{sarl} & TPAMI 2019 & $49_{\pm3}$ & $0.25_{\pm0.02}$ & $9_{\pm1}$ & $0.06_{\pm0.00}$ & $46_{\pm5}$ & $0.23_{\pm0.03}$ \\
         D-VAT \cite{dvat} & RAL 2024 & $48_{\pm8}$ & $0.26_{\pm0.02}$ & $47_{\pm13}$ & $0.26_{\pm0.04}$ & $44_{\pm8}$ & $0.22_{\pm0.05}$\\
         GC-VAT \cite{gcvat} & NeurIPS 2025 & $\underline{279}_{\pm110}$ & $\underline{0.80}_{\pm0.30}$ & $\underline{307}_{\pm124}$ & $\textbf{0.84}_{\pm0.29}$ & $\underline{239}_{\pm134}$ & $\underline{0.73}_{\pm0.32}$\\
         \rowcolor{aliceblue} \textbf{Ours} & CVPR 2026 & $\textbf{310}_{\pm2}$ & $\textbf{0.83}_{\pm0.01}$ & $\textbf{311}_{\pm3}$ & $\underline{0.83}_{\pm0.01}$ & $\textbf{307}_{\pm2}$ & $\textbf{0.83}_{\pm0.01}$\\
         \midrule
         \multirow{2}{*}{Tracker} & \multirow{2}{*}{Publication} & \multicolumn{2}{c|}{downtown} & \multicolumn{2}{c|}{lake} & \multicolumn{2}{c}{farmland} \\
         & & $CR\uparrow$ & $TSR\uparrow$ & $CR\uparrow$ & $TSR\uparrow$  & $CR\uparrow$ & $TSR\uparrow$ \\ \midrule
         SARL \cite{sarl} & TPAMI 2019 & $54_{\pm5}$ & $0.29_{\pm0.01}$ & $47_{\pm3}$ & $0.24_{\pm0.02}$ & $60_{\pm25}$ & $0.23_{\pm0.01}$\\
         D-VAT \cite{dvat} & RAL 2024 & $9_{\pm1}$ & $0.06_{\pm0.01}$ & $46_{\pm8}$ & $0.26_{\pm0.06}$ & $13_{\pm1}$ & $0.07_{\pm0.00}$\\
         GC-VAT \cite{gcvat} & NeurIPS 2025 & $\underline{203}_{\pm119}$ & $\underline{0.65}_{\pm0.30}$ & $\underline{181}_{\pm116}$ & $\underline{0.61}_{\pm0.31}$ & $\underline{243}_{\pm117}$ & $\underline{0.68}_{\pm0.32}$\\
         \rowcolor{aliceblue} \textbf{Ours} & CVPR 2026 & $\textbf{370}_{\pm3}$ & $\textbf{0.99}_{\pm0.01}$ & $\textbf{318}_{\pm10}$ & $\textbf{0.84}_{\pm0.02}$ & $\textbf{310}_{\pm3}$ & $\textbf{0.83}_{\pm0.02}$\\
         \bottomrule
    \end{tabular}
    \label{tab:dat}
\end{minipage}
\hfill
\begin{minipage}{.28\textwidth}
        \caption{Average results of VAT trackers in UnrealCV (details in Appendix).}
\vspace{-5pt}
\renewcommand{\arraystretch}{0.94}
\renewcommand{\tabcolsep}{1.8pt}
    \begin{tabular}{l|ccc}
        \toprule
        \multirow{2}{*}{Tracker} & \multicolumn{3}{c}{Average} \\ 
         & $AR\uparrow$ & $EL\uparrow$ & $SR\uparrow$\\ \midrule
         DiMP \cite{dimp} & 204 & 367 & 0.58\\
         SARL \cite{sarl} & 240 & 394 & 0.57\\
         AD-VAT \cite{advat} & 238 & 416 & 0.62\\
         AD-VAT+ \cite{advat+} & 307 & 454 & 0.76\\
         TS \cite{ts} & 312 & 474 & 0.86\\
         RSPT \cite{rspt} & \underline{329} & 478 & 0.92\\
         EVT \cite{evt} & 297 & \underline{490} & \underline{0.95}\\
         FAn \cite{fan} & 237 & 462 & 0.90\\
         FAn+SAM2 \cite{sam2} & 257 & 474 & 0.94\\
         TrackVLA \cite{trackvla} & - & \textbf{500} & \textbf{1.00}\\
         \rowcolor{aliceblue} \textbf{Ours} & \textbf{391} & \textbf{500} & \textbf{1.00}\\
         \bottomrule
    \end{tabular}
    \label{tab:unrealcv}
\end{minipage}
\end{table*}

\section{Experiment}
\label{sec:experiment}

\subsection{Experimental Settings}
\label{sec:exp_setting}

\textbf{Experimental Setup.} We evaluate \methodname in a zero-shot setting on two state-of-the-art VAT benchmarks: UnrealCV~\cite{unrealcv} and DAT~\cite{gcvat}. In UnrealCV, we test \methodname on 3 challenging maps with distractors (Parking Lot (2D), UrbanCity (4D), ComplexRoom (4D)) and 5 single-target maps. In DAT, we perform evaluations under daytime condition across all six scenes, comparing \methodname with within-scene trained models. We further validate \methodname on real-world image datasets including VOT~\cite{vot}, DTB70~\cite{dtb70}, and UAVDT~\cite{uavdt}, and deploy it on a \textit{DJI Tello} drone~\cite{tello} for real-world evaluation. \textit{Implementation details and hyperparameter analysis are provided in the Appendix}.

\begin{figure*}[t]
  \centering
  \begin{subfigure}{0.26\linewidth}
    \includegraphics[width=0.98\columnwidth]{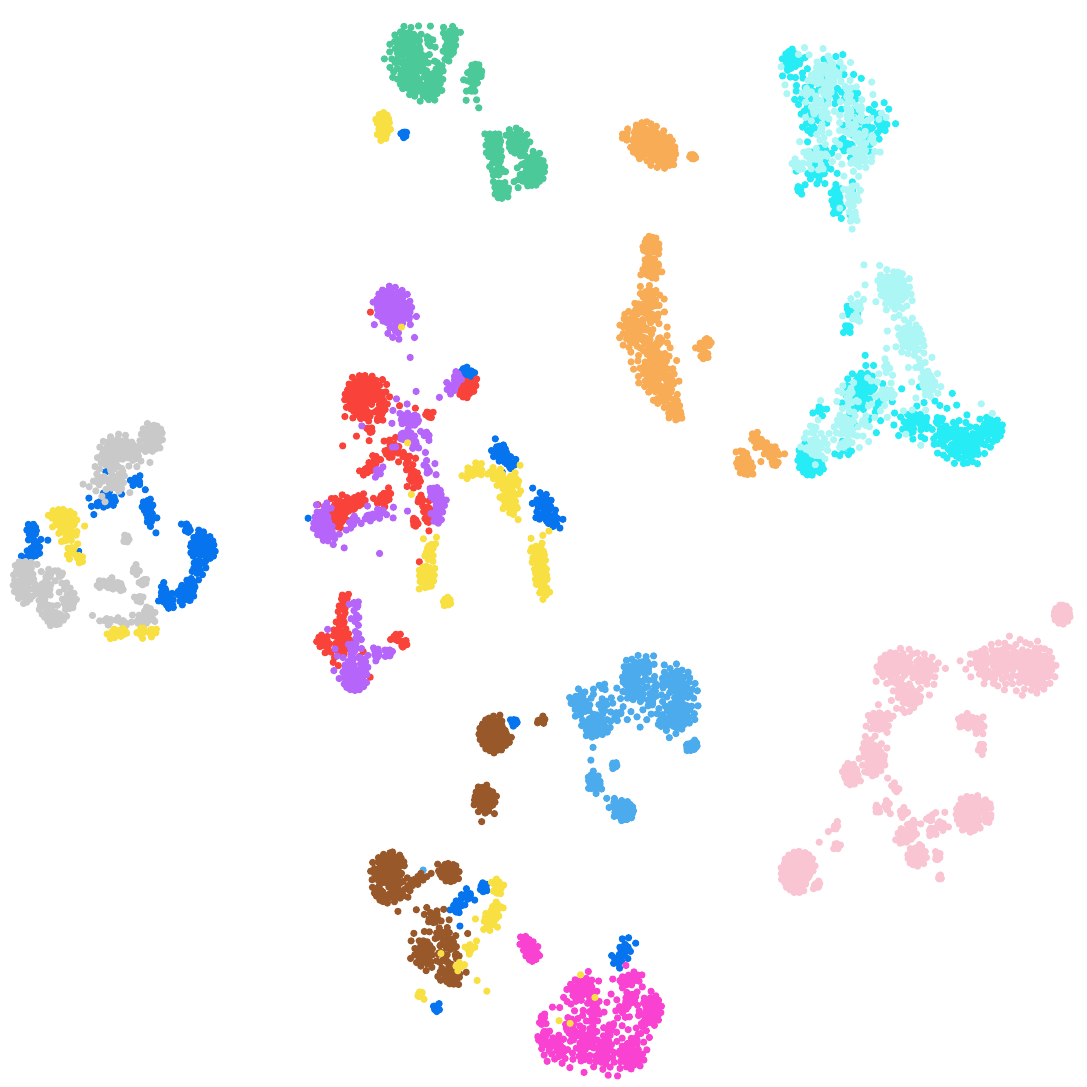}
    \vspace{8pt}
    \caption{t-SNE on prototypes of DINOv3.}
    
  \end{subfigure}
  \hfill
  \begin{subfigure}{0.26\linewidth}
    \includegraphics[width=0.98\columnwidth]{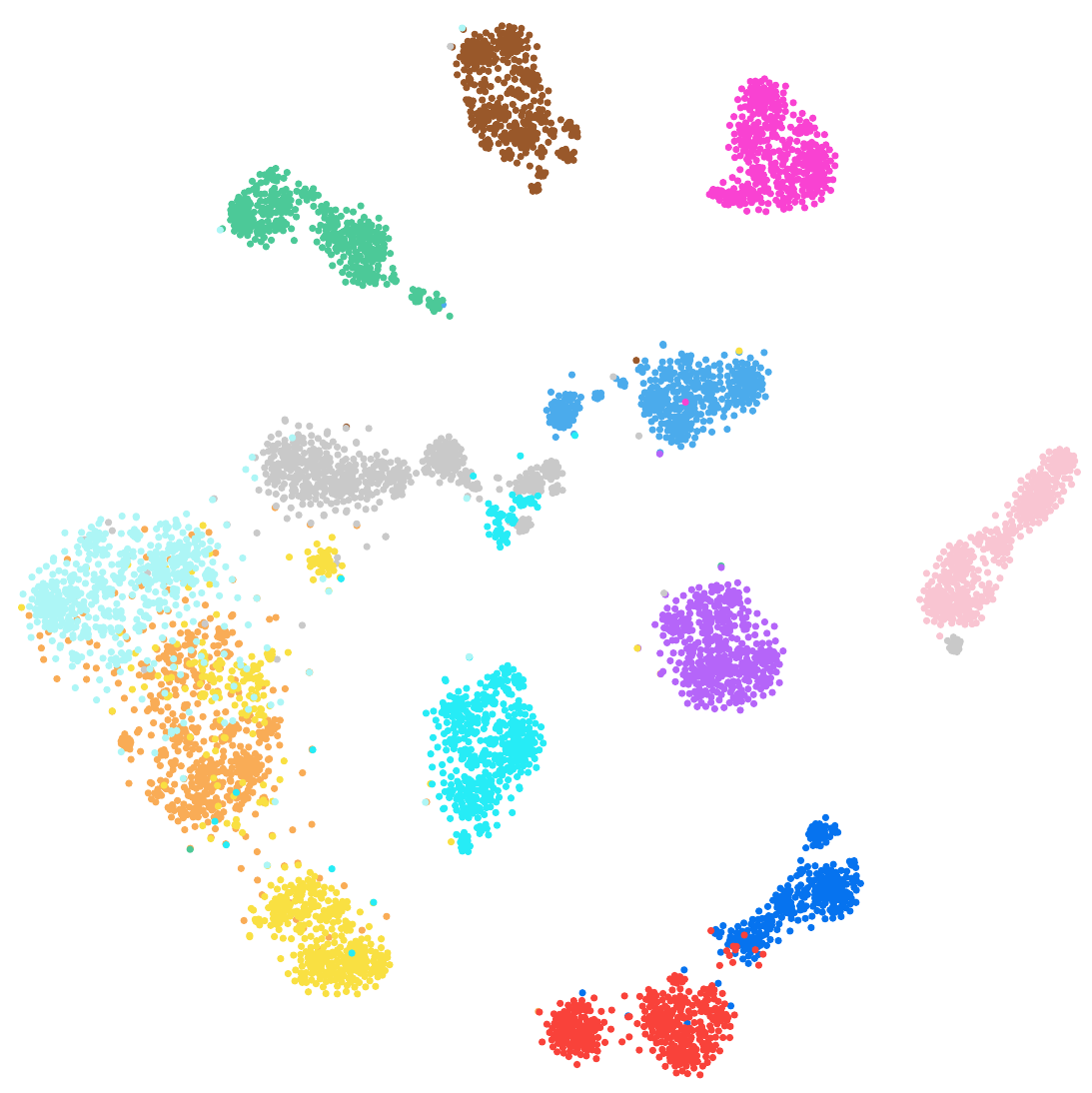}
    \vspace{8pt}
    \caption{t-SNE on prototypes of \methodname.}
  \end{subfigure}
  \hfill
  \begin{subfigure}{0.46\linewidth}
    \includegraphics[width=\columnwidth]{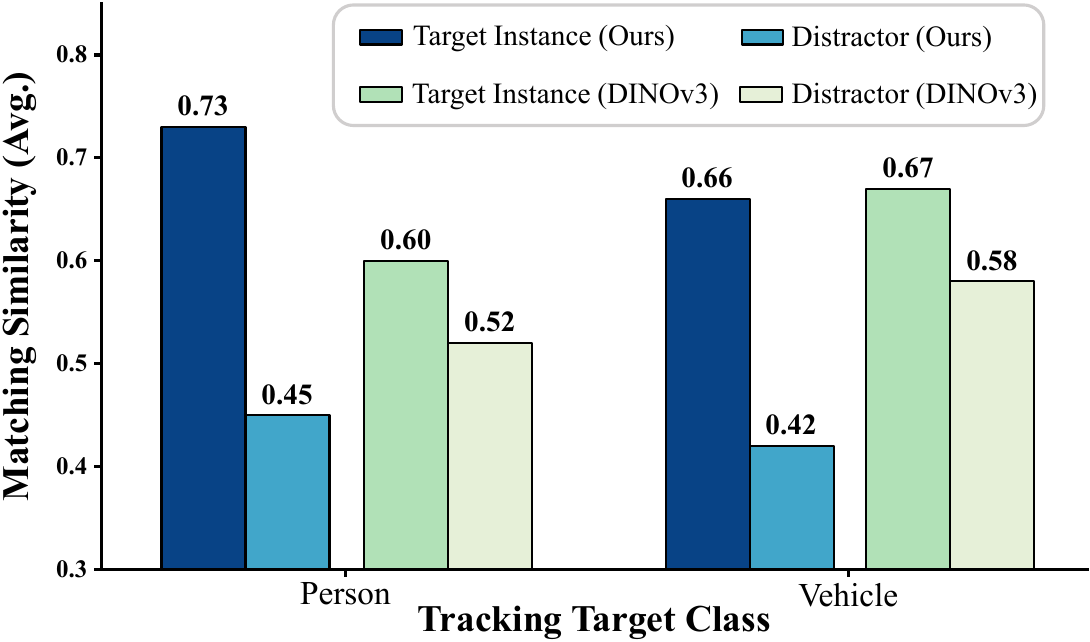}
    \caption{Average cosine similarity of prototype to target and to distractors.}
  \end{subfigure}
  \vspace{-3pt}
  \caption{Ablation study on the instance-aware offline prototype initialization module. In (a) and (b), each point represents a prototype feature extracted from single view of an instance, with all points from the same instance assigned the same color. (c) shows the similarity of the offline initialized prototype with the target instance versus distractors during online tracking.}
  \label{fig:ablation_prototype_init}
\end{figure*}

\begin{table*}[t]
    \small
    \centering
        \caption{Results of ablation experiments on UnrealCV environments containing distractors.}
        \vspace{-5pt}
\renewcommand{\arraystretch}{0.98}
\renewcommand{\tabcolsep}{4pt}
    \begin{tabular}{c|c|ccc|ccc|ccc|ccc}
        \toprule
        \multirow{2}{*}{Module} & \multirow{2}{*}{Setting} & \multicolumn{3}{c|}{Parking Lot (2D)} & \multicolumn{3}{c|}{UrbanCity (4D)} & \multicolumn{3}{c|}{ComplexRoom (4D)} & \multicolumn{3}{c}{Average}\\ 
         &  & $AR$ & $EL$ & $SR$ & $AR$ & $EL$ & $SR$ & $AR$ & $EL$ & $SR$ & $AR$ & $EL$ & $SR$\\ \midrule
         \multirow{2}{*}{\shortstack{Instance-Aware Offline \\ Prototype Initialization}} & w/ DINOv3~\cite{dinov3} Prototype & \underline{384} & \underline{481} & \underline{0.92} & \underline{369} & \underline{443} & \underline{0.84} & \underline{370} & \underline{454} & \underline{0.85} & \underline{374} & \underline{459} & \underline{0.87}\\
         & \cellcolor{aliceblue}\textbf{Ours} & \cellcolor{aliceblue}\textbf{392} & \cellcolor{aliceblue}\textbf{482} & \cellcolor{aliceblue}\textbf{0.93} & \cellcolor{aliceblue}\textbf{385} & \cellcolor{aliceblue}\textbf{486} & \cellcolor{aliceblue}\textbf{0.95} & \cellcolor{aliceblue}\textbf{392} & \cellcolor{aliceblue}\textbf{481} & \cellcolor{aliceblue}\textbf{0.92} & \cellcolor{aliceblue}\textbf{390} & \cellcolor{aliceblue}\textbf{483} & \cellcolor{aliceblue}\textbf{0.93} \\
         \midrule
         \multirow{3}{*}{\shortstack{Online Visual\\Prototype Enhancement}} & w/o Online Enhancement & 352 & 463 & 0.84 & 340 & 469 & 0.77 & 354 & \underline{477} & 0.84 & 349 & 470 & 0.82\\
         & w/ Average Enhancement & \underline{356} & \textbf{485} & \underline{0.90} & \underline{347} & \underline{477} & \underline{0.88} & \underline{359} & \textbf{481} & \underline{0.89} & \underline{354} & \underline{481} & \underline{0.89}\\
         & \cellcolor{aliceblue}\textbf{Ours} & \cellcolor{aliceblue}\textbf{392} & \cellcolor{aliceblue}\underline{482} & \cellcolor{aliceblue}\textbf{0.93} & \cellcolor{aliceblue}\textbf{385} & \cellcolor{aliceblue}\textbf{486} & \cellcolor{aliceblue}\textbf{0.95} & \cellcolor{aliceblue}\textbf{392} & \cellcolor{aliceblue}\textbf{481} & \cellcolor{aliceblue}\textbf{0.92} & \cellcolor{aliceblue}\textbf{390} & \cellcolor{aliceblue}\textbf{483} & \cellcolor{aliceblue}\textbf{0.93} \\
         \midrule
         \multirow{3}{*}{\shortstack{Confidence-Aware\\Kalman Filter}} & w/o Kalman Filter & 349 & 474 & 0.87 & 343 & \underline{459} & 0.88 & 334 & 459 & 0.85 & 342 & 464 & 0.87\\
         & w/ Linear Kalman Filter~\cite{kalman1960new}& \underline{356} & \textbf{488} & \underline{0.88} & \underline{359} & 457 & \underline{0.91} & \underline{346} & \underline{463} & \underline{0.90} & \underline{354} & \underline{469} & \underline{0.90} \\
         & \cellcolor{aliceblue}\textbf{Ours} & \cellcolor{aliceblue}\textbf{392} & \cellcolor{aliceblue}\underline{482} & \cellcolor{aliceblue}\textbf{0.93} & \cellcolor{aliceblue}\textbf{385} & \cellcolor{aliceblue}\textbf{486} & \cellcolor{aliceblue}\textbf{0.95} & \cellcolor{aliceblue}\textbf{392} & \cellcolor{aliceblue}\textbf{481} & \cellcolor{aliceblue}\textbf{0.92} & \cellcolor{aliceblue}\textbf{390} & \cellcolor{aliceblue}\textbf{483} & \cellcolor{aliceblue}\textbf{0.93}\\
         \midrule
         \multirow{4}{*}{\shortstack{Occlusion-Aware\\Trajectory Planner}} & w/o Planning (\textit{i.e.}, PID \cite{pid}) & 345 & \underline{478} & 0.83 & \underline{360} & 470 & \underline{0.88} & 353 & 460 & 0.84 & 353 & 469 & 0.85\\
         & w/ EVT Planning~\cite{evt}& 308 & 472 & 0.82 & 303 & \underline{475} & \underline{0.88} & 360 & \underline{482} & 0.90 & 324 & \underline{476} & 0.87 \\
         & w/o Bounding Box & \underline{368} & 476 & \underline{0.90} & 347 & 456 & 0.86 & \textbf{396} & \textbf{484} & \underline{0.91} & \underline{370} & 472 & \underline{0.89} \\
         & \cellcolor{aliceblue}\textbf{Ours} & \cellcolor{aliceblue}\textbf{392} & \cellcolor{aliceblue}\textbf{482} & \cellcolor{aliceblue}\textbf{0.93} & \cellcolor{aliceblue}\textbf{385} & \cellcolor{aliceblue}\textbf{486} & \cellcolor{aliceblue}\textbf{0.95} & \cellcolor{aliceblue}\underline{392} & 
         \cellcolor{aliceblue}481 & \cellcolor{aliceblue}\textbf{0.92} & \cellcolor{aliceblue}\textbf{390} & \cellcolor{aliceblue}\textbf{483} & \cellcolor{aliceblue}\textbf{0.93} \\
         \bottomrule
    \end{tabular}
    \label{tab:ablation}
    \vspace{-8pt}
\end{table*}

\textbf{Metrics.} In UnrealCV, we use three metrics: \textit{Accumulated Reward (AR)} measures the average reward over 100 episodes. \textit{Episode Length (EL)} denotes the average steps per episode, with early termination if the target remains out of view for more than 50 consecutive steps. \textit{Success Rate (SR)} represents the proportion of success episodes where the tracker successfully follows the target for 500 steps. In DAT, \textit{Cumulative Reward (CR)} measures the average total reward over 40 episodes, and \textit{Tracking Success Rate (TSR)} denotes the success rate within 1,500 steps.

\textbf{Baselines.} We compare \methodname against 12 baselines, grouped into two categories: the RL-based trackers include SARL~\cite{sarl}, AD-VAT~\cite{advat}, AD-VAT+~\cite{advat+}, TS~\cite{ts}, D-VAT~\cite{dvat}, GC-VAT~\cite{gcvat}, and EVT~\cite{evt}. The remaining methods follow a pipeline design: DiMP~\cite{dimp}, RSPT~\cite{rspt}, Follow Anything (FAn)~\cite{fan}, FAn+SAM2~\cite{sam2}, and TrackVLA~\cite{trackvla}. \textit{Related works are detailed in the Appendix}.

\subsection{Comparison Experiments}
\label{sec:exp_comparison}

\textbf{Effectiveness and Efficiency in UnrealCV Environments with Distractors.}
We evaluate \methodname in UnrealCV~\cite{unrealcv} maps with similar distractors. As shown in Tab.~\ref{tab:unrealcv_hard}, \methodname achieves the best performance among all compared methods, including the SOTA method TrackVLA~\cite{trackvla}. Specifically, \methodname outperforms TrackVLA by 2.2\% in average success rate while requiring far less computation. We train \methodname on single RTX 3090 GPU for 15 hours, compared to TrackVLA’s 24×H100 GPUs for the same duration. Furthermore, \methodname has a model size of 584M, significantly smaller than both TrackVLA and EVT. It runs at \textbf{35 FPS} on an RTX 3090 GPU, notably faster than TrackVLA’s 10 FPS on an RTX 4090 GPU, ensuring real-time performance. 

Moreover, we also evaluate \methodname on distractor-free scenes. As shown in Tab.~\ref{tab:unrealcv}, \methodname achieves perfect performance ($SR=1.00$) across all five scenes, maintaining the target in view for the full episode ($EL=500$).  

\textbf{Effectiveness in DAT Environments.} As shown in Tab.~\ref{tab:dat}, \methodname consistently outperforms existing methods. It achieves average improvements of 32.6\% in CR ($242\rightarrow 321$) and 19.4\% in TSR ($0.72\rightarrow 0.86$) over GC-VAT~\cite{gcvat}. The standard deviation of TSR is less than $0.02$, indicating stable tracking performance. Notably, DAT targets are \textit{vehicles}, which are unseen during training, demonstrating the strong zero-shot generalization of \methodname.

\begin{figure*}[t]
\hspace{-3pt}
\begin{minipage}[t]{0.60\linewidth}
    \vspace{-8pt}
    \centering
    \includegraphics[width=\columnwidth]{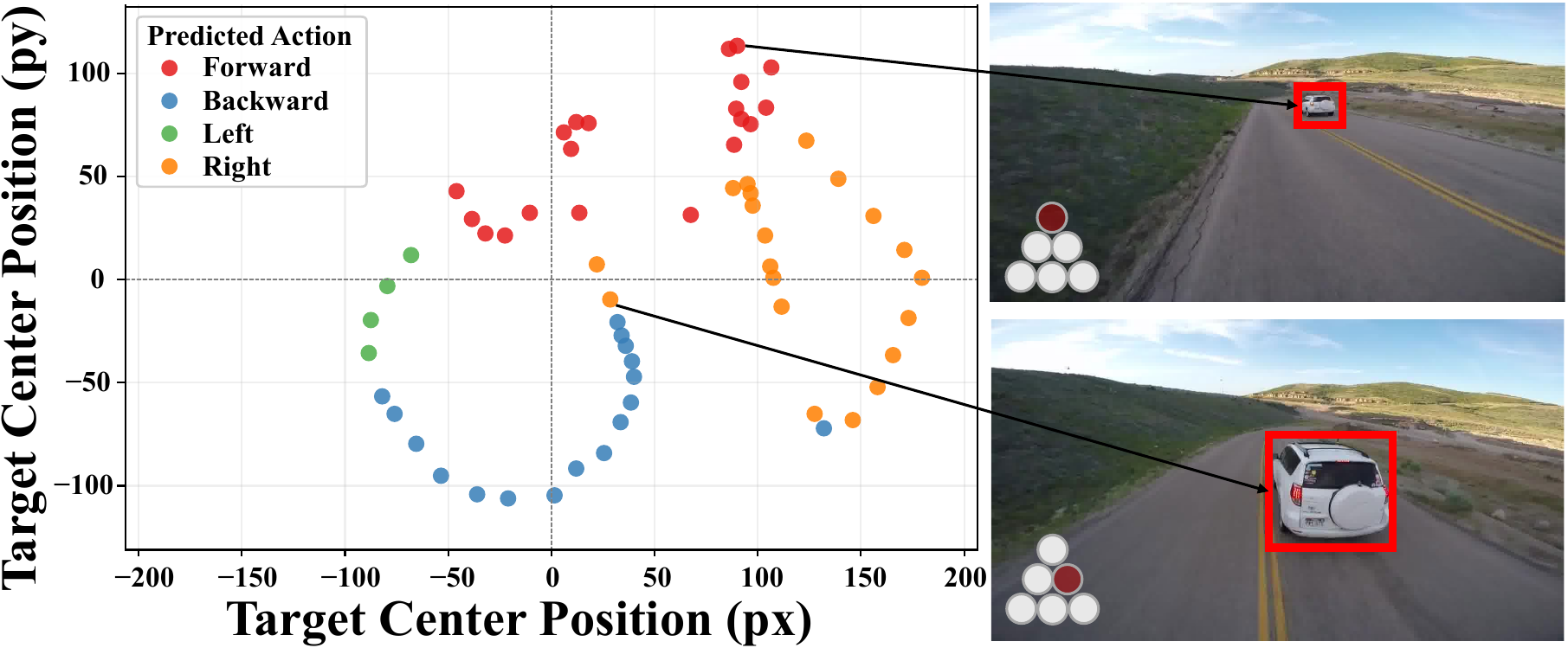}
    \vspace{-16pt}
    \caption{Results on real-world images of the \textit{Car8} video in DTB70 \cite{dtb70} dataset.}
    \label{fig:sim2real_car8}
\end{minipage}
\hfill
\centering
\begin{minipage}[t]{0.38\linewidth}
\vspace{-8pt}
\fontsize{8}{10}\selectfont
\centering
        \captionof{table}{Effectiveness of \methodname on
real-world image evaluation. We select eight videos from each of VOT~\cite{vot}, DTB70~\cite{dtb70} and UAVDT\cite{uavdt} datasets.}
\vspace{-2pt}
\renewcommand{\arraystretch}{1.0}
\renewcommand{\tabcolsep}{1.5pt}
    \begin{tabular}{lccc}
    \toprule[1.5pt]
    \multirow{2}{*}{Tracker} & \multicolumn{3}{c}{Average Correct Action Rate} \\
    \cmidrule(lr){2-4}
    & VOT \cite{vot} & DTB70 \cite{dtb70} & UAVDT \cite{uavdt} \\
    \midrule
    Random & 0.413 & 0.426 & 0.421 \\
    ORTrack~\cite{wu2025ortrack} & 0.661 & 0.781 & 0.879 \\
    ORTrack (w/o bbox) & 0.260 & 0.380 & 0.296 \\
    FAn~\cite{fan} & 0.720 & 0.719 & 0.592 \\
    GC-VAT~\cite{gcvat} & \underline{0.795} & \underline{0.833} & \underline{0.802} \\
    \rowcolor{aliceblue} \textbf{Ours} & \textbf{0.879} & \textbf{0.900} & \textbf{0.945} \\
    \bottomrule
\end{tabular}
    \label{tab:real_image}
\vspace{-6pt}    
\end{minipage}
\end{figure*}

\begin{figure}[t]
  \centering
  \includegraphics[width=\linewidth]{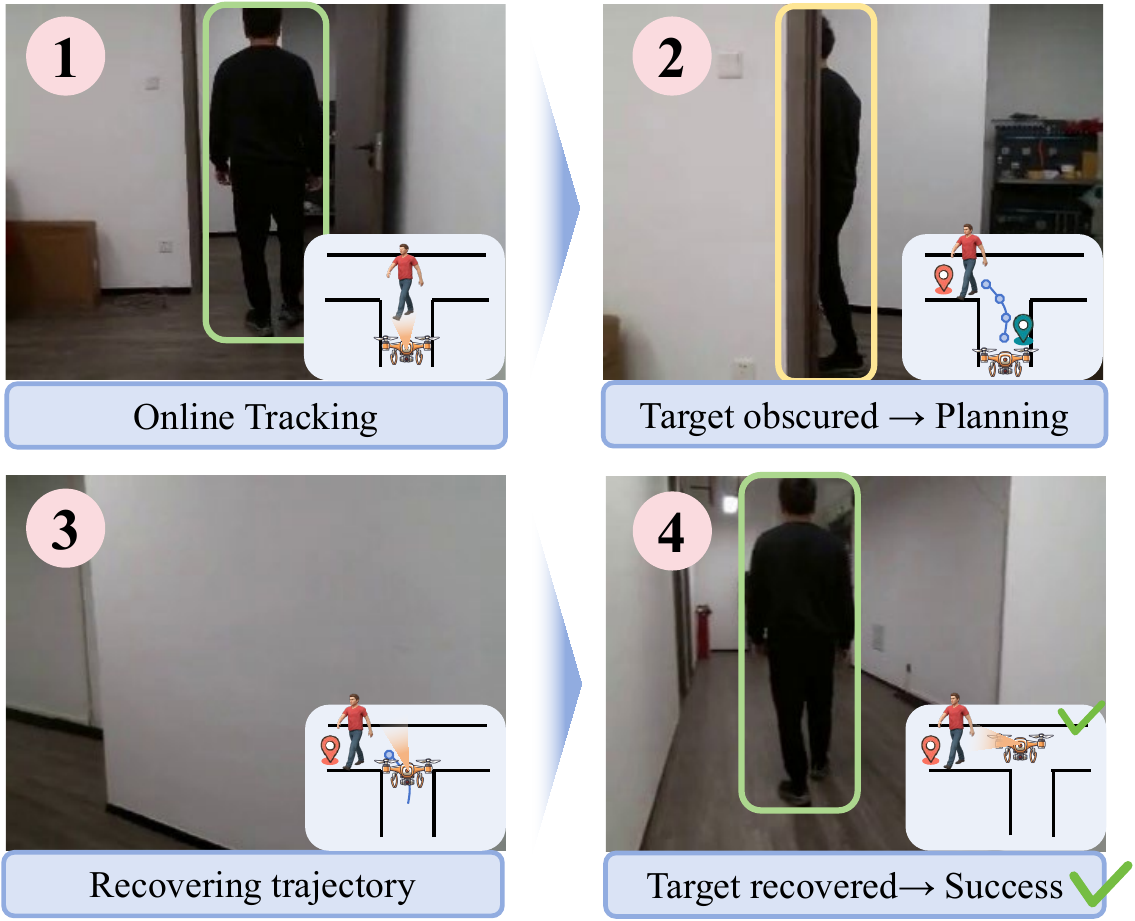}
    \caption{Recovery of prolonged occlusion on a \textit{DJI Tello} drone.}
  \label{fig:real_drone_long_track}
\end{figure}

\subsection{Ablation Experiments}
\label{sec:exp_ablation}

We validate our prototype initialization module (Sec.~\ref{sec:prototype_init}) on video datasets~\cite{personpath22,lasot} through qualitative and quantitative analysis. We then conduct ablation studies on three UnrealCV maps with distractors to evaluate the effectiveness of our prototype initialization, prototype enhancement, confidence-aware Kalman filter, and the planning module.

\textbf{Effectiveness of Offline Prototype Initialization.} We compare our prototype (Sec.~\ref{sec:prototype_init}) with raw DINOv3~\cite{dinov3} features on \textit{video 40} in PersonPath22~\cite{personpath22}. We extract a prototype for each target per frame. As shown in Fig.~\ref{fig:ablation_prototype_init}(a)-(b), our prototypes are well-separated across instances, while DINOv3 features largely overlap. Quantitatively, we compute the average cosine similarity of prototype to targets and to distractors on  \textit{video 40} (for persons) and all \textit{car} videos in LaSOT~\cite{lasot} (for vehicles). \methodname achieves a similarity margin of $0.28$ between person targets and distractors, greatly larger than DINOv3’s $0.08$ (Fig.~\ref{fig:ablation_prototype_init}(c)). When applied to tracking, \methodname improves avg. SR by 6.9\% relative to the DINOv3 baseline, as shown in Tab.~\ref{tab:ablation}(rows 1-2).

\textbf{Effectiveness of Online Prototype Enhancement.} We compare our EMA-based enhancement against average-update and no-update variants. As shown in Tab.~\ref{tab:ablation} (rows 3-5), it outperforms both no-update (+13.4\% avg. SR) and average-update variant (+4.5\% avg. SR), effectively balancing historical knowledge and recent observations.

\textbf{Effectiveness of Confidence-Aware Kalman Filter.} As shown in Tab.~\ref{tab:ablation} (rows 6-8), \methodname improves avg. SR by 6.9\% over the no-filter baseline, and outperforms the linear Kalman filter variant (+3.3\% avg. SR), showing that state estimation helps correct the unreliable bounding boxes.

\textbf{Effectiveness of Planning Module.} We compare our planner with three alternatives: no planning (PID~\cite{pid}), EVT planning~\cite{evt} module based on offline RL, and a planner trained without bounding boxes as input. As shown in Tab.~\ref{tab:ablation} (rows 9-12), \methodname achieves an average SR of 0.93, outperforming EVT by 6.9\% and the variant without bounding box input by 4.5\%. This confirms that bounding box guidance enhances planning robustness, and our \methodname can effectively recover trajectories of occluded targets.

\subsection{Experiments in Real-world Scenarios}
\label{sec:exp_realworld}

\textbf{Effectiveness on real-world images.} To assess \methodname’s transferability to real-world scenarios, we follow the setting of~\cite{gcvat} and perform zero-shot evaluation on 8 videos each from VOT~\cite{vot}, DTB70~\cite{dtb70}, and UAVDT~\cite{uavdt}. Although camera control is unavailable in these videos, we can feed frames into the model and verify whether the predicted action would move the target toward the image center.

Qualitative results on video \textit{Car8} from DTB70 dataset are shown in Fig.~\ref{fig:sim2real_car8}. Each point denotes the target’s location in the image, with its color indicating the predicted action, and arrows showing visual observations. When the target deviates from the image center, \methodname correctly predicts actions to steer it back (e.g., \methodname outputs a rightward control when the target is on the right). Quantitatively, we use Correct Action Rate (CAR), i.e., the action-prediction accuracy, to evaluate the performance. As shown in Tab.~\ref{tab:real_image}, \methodname achieves an avg. CAR of 90.8\%, outperforming GC-VAT~\cite{gcvat} by 12.1\%. \textit{See Appendix for more results.}

Furthermore, to comprehensively evaluate robustness, we compare \methodname with ORTrack~\cite{wu2025ortrack}, the base tracker of the Online Prototype Enhancement Tracker in Sec.~\ref{sec:apt}. It is important to note that ORTrack is a passive visual tracking model that necessitates an initial bounding box as input. As shown in Tab.~\ref{tab:real_image}, ORTrack lags behind our method by 13.4\% on average ($0.774$ \textit{vs.} $0.908$), even though it uses ground-truth boxes for initialization. Moreover, ORTrack collapses when initialized with only a reference image.

\begin{figure}[t]
  \centering
  \begin{subfigure}{0.49\linewidth}
    \includegraphics[width=\textwidth]{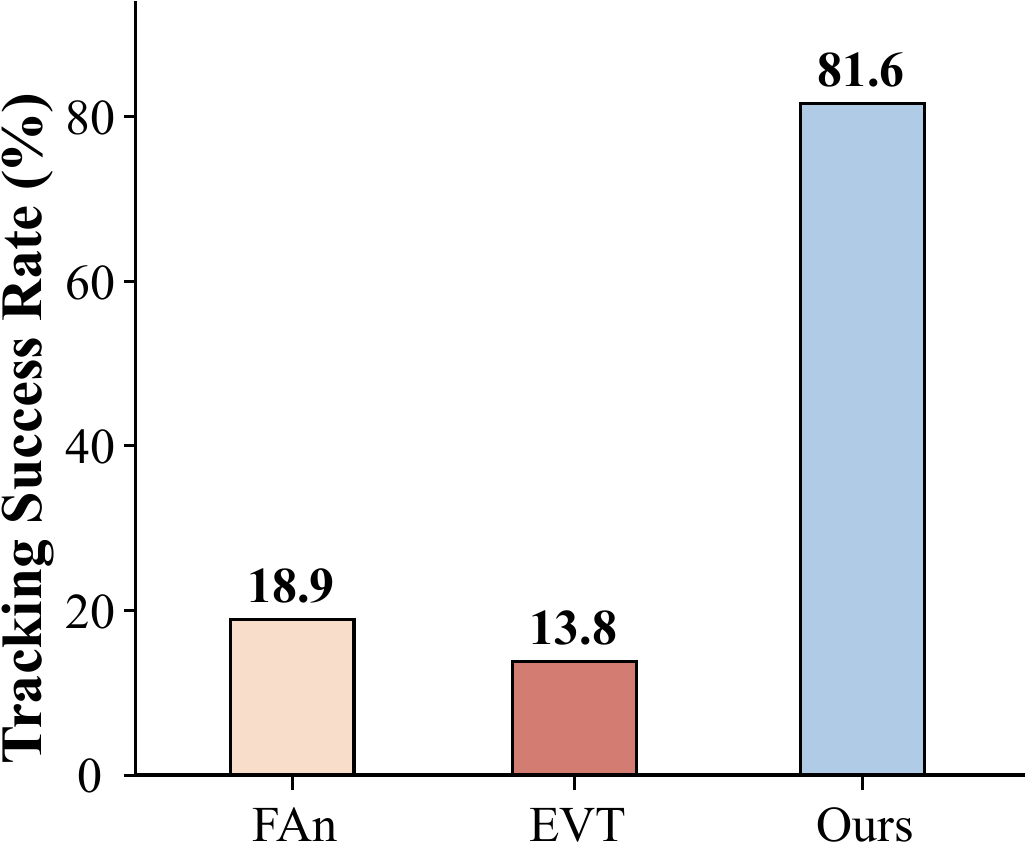}
    \caption{Success rate on real drones.}
    \label{fig:tsr_histogram}
  \end{subfigure}
  \hfill
  \begin{subfigure}{0.49\linewidth}
    \includegraphics[width=\textwidth]{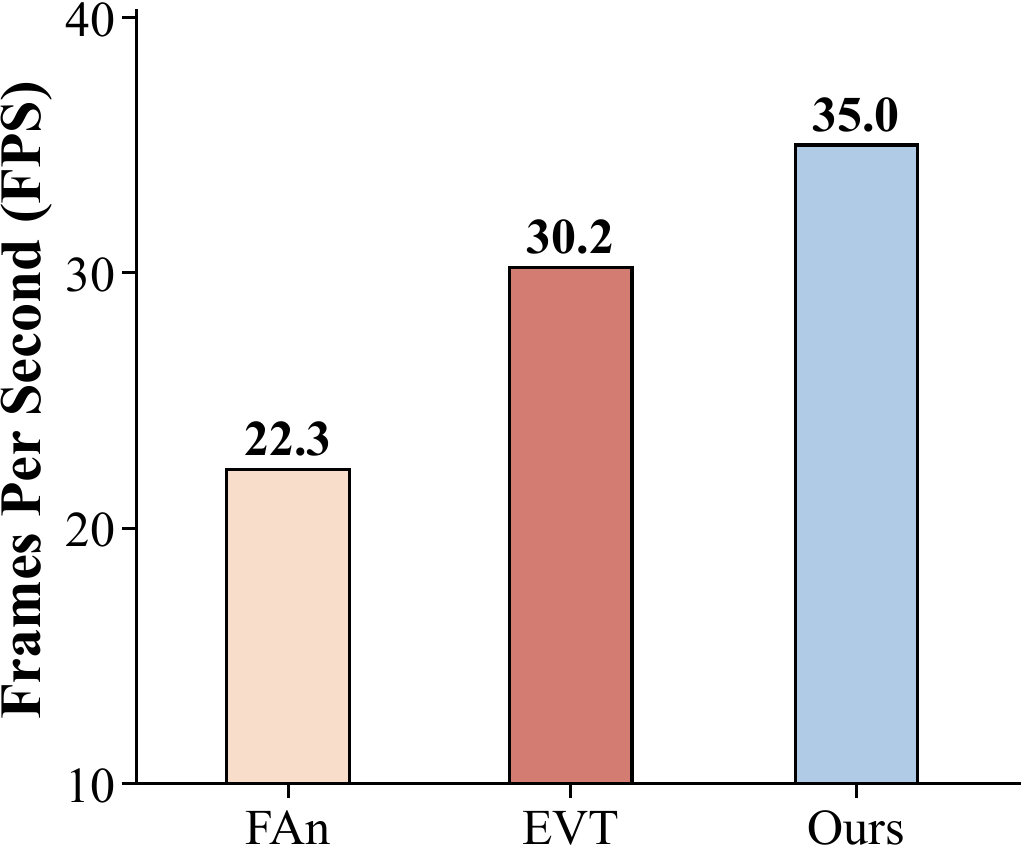}
    \caption{FPS on real drones.}
    \label{fig:fps_histogram}
  \end{subfigure}
  \caption{Experiment results on real drone \textit{DJI Tello}.}
  \label{fig:real_drone_stat}
  \vspace{-10pt}
\end{figure}

\textbf{Effectiveness on real robots.} To assess the real-world applicability of \methodname beyond simulation and image-based benchmarks, we deploy it on a \textit{DJI Tello} drone~\cite{tello}. During operation, the drone transmits H.264 video streams over Wi-Fi to a ground station equipped with an NVIDIA RTX 3090 GPU. The station then decodes the video, runs \methodname to generate control commands, and sends them back to the drone for closed-loop tracking.

We evaluate \methodname in two challenging settings. In scenarios with distractors, \methodname accurately locates and re-identifies the designated target after temporary occlusions, demonstrating robust instance-level performance. Under prolonged occlusion, \methodname actively navigates around obstacles to recover the target (Fig.~\ref{fig:real_drone_long_track}), whereas baseline methods such as EVT~\cite{evt} and FAn~\cite{fan} lose the target permanently. Moreover, We adopt Tracking Success Rate (TSR) to quantify tracking performance. As shown in Fig.~\ref{fig:real_drone_stat}(a), \methodname achieves a TSR of 81.6\%, far surpassing the best baseline (18.9\%). Furthermore, as shown in Fig.~\ref{fig:real_drone_stat}(b), \methodname runs at 35.0 FPS, enabling real-time operation in dynamic scenes. \textit{See Appendix for more results.}

\section{Conclusion}
\label{sec:conclusion}

In this paper, we propose a pipeline VAT method, called \methodname. Specifically, we introduce an instance-aware offline prototype initialization module to mitigate distractor confusion. Then we propose an online tracker that enhances prototypes online and integrates a confidence-aware Kalman filter for stable tracking. Moreover, we develop a planner trained on our new \texttt{Planning-20k} dataset, which plans trajectories to recover occluded targets and generalizes to unseen targets. Experiments on simulators, real-world images, and a drone demonstrate \methodname's state-of-the-art performance and real-time inference.

\section*{Acknowledgements}
This work was partially supported by the Joint Funds of the National Natural Science Foundation of China (Grant No.U24A20327).

{
    \small
    \bibliographystyle{ieeenat_fullname}
    \bibliography{main}
}

\newpage

\appendix

We organize the supplementary materials as follows. Section \ref{sec:related_work} reviews related work on visual active tracking. Section \ref{sec:theo} presents the complete theoretical analysis and proofs of our instance prototype. Section \ref{sec:details} provides implementation details and hyperparameter analysis. Section \ref{sec:exp} reports additional results from both simulation and real-world experiments.

\section{Related Work}
\label{sec:related_work}

\textbf{Visual Active Tracking.} Most VAT methods fall into two categories. RL-based methods~\cite{sarl,advat+,evt,dvat,gcvat} learn end-to-end policies that map observations to actions, enabling low-latency control. However, they rely on sparse rewards that lead to slow convergence in complex scenes. To address this, EVT~\cite{evt} uses offline RL to improve sample efficiency, and GC-VAT~\cite{gcvat} employs curriculum learning~\cite{zhou2024curbench,zhou2022curml} to progressively guide policy optimization. Despite these advances, RL-based trackers still suffer from poor real-world performance due to the sim-to-real gap. 

Pipeline methods~\cite{dimp,fan,rspt,trackvla} decouple tracking into perception and control stages. FAn~\cite{fan} uses foundation models like SAM~\cite{sam1} and DINOv2~\cite{dinov2} to detect the target and feed the predicted bounding box to a PID controller~\cite{pid} for tracking. Although benefiting from strong visual models, these methods struggle to distinguish the target instance from distractors because most visual models are trained for category-level recognition. Besides, the controllers often struggle to recover tracking under occlusions. Recent extensions like TrackVLA~\cite{trackvla} enhance the perception stage by incorporating language instructions based on an LLM, achieving SOTA performance. However, this improvement comes at high computational cost, which degrades performance in high-dynamic scenarios. Drawing inspiration from prototype-based historical representation~\cite{fan,zhou2025pgfa} and diffusion-based planning methods~\cite{trackvla,chi2024diffusionpolicy}, we propose an instance prototype for precise target matching and a diffusion-based planner to recover lost targets under occlusions.

\section{Theoretical Analysis on Instance Prototype}\label{sec:theo}

\textbf{Notations.} For any target instance $T_k$, we define its true feature manifold $M_k$ as the set of all features extracted by $\texttt{Desc}(\cdot)$ under arbitrary imaging conditions. From this manifold, the normalized reference features of $T_k$ are obtained via Eq.~(2) (in the main text) and denoted by the set $F^*_k$. For each reference feature $f^*_k \in F^*_k$, we generate a corresponding set of multi-view augmented features $\{f_{k,i}\}_{i=1}^N$. The mean of the augmented features, denoted $F_{\mathrm{avg},k}$, is given by: $F_{\mathrm{avg},k} = \frac{1}{N}\sum_{i=1}^N f_{k,i}$. The instance-aware prototypes are then derived via Eq.~(3) (in the main text) and form the set $\hat{F}_k$. Finally, for any manifold $M$, we use $\mathbb{E}_{g \sim M}[\cdot]$ to denote expectation over $M$. 

\begin{ass}
\label{ass:global_coverage}
\textbf{(Multi-View Global Coverage)} For any target $T_k$, when the number of augmented views $N$ is sufficiently large, the multi-view augmented features $\{f_{k,i}\}_{i=1}^N$ cover the true feature manifold $M_k$ well enough such that the average squared distance from any point on the manifold to the augmented features is bounded by that of  $f^*_k$:
\begin{equation}
   \mathbb{E}_{g \sim M_k}\left[\frac{1}{N}\sum_{i=1}^N \|f_{k,i} - g\|_2^2\right] \leq \mathbb{E}_{g \sim M_k}\left[\|f^*_k - g\|_2^2\right].
\end{equation}
\end{ass}

Assumption~\ref{ass:global_coverage} captures that multi-view augmentation better covers the global structure of $M_k$, thereby reducing average distance to the manifold.

\begin{ass}
\label{ass:manifold_cohesion_separation}
\textbf{(Manifold Cohesion and Separation)} 
There exist constants $\delta\in(0,1), \ \eta \in (-1, 1)$ such that for any features $g_1, g_2$ belonging to the same target manifold $M_k$, their cosine similarity satisfies $S(g_1, g_2) \geq \delta$ (\textit{intra-manifold cohesion}), while for any features $g_k \in M_k$ and $g_j \in M_j$ from distinct targets $T_k \neq T_j$, the similarity satisfies $S(g_k, g_j) \leq \eta$ (\textit{inter-manifold separation}). 
\end{ass}

Assumption~\ref{ass:manifold_cohesion_separation} ensures cohesive features within the same target and well-separated features across different targets.

\begin{lemma}
\label{lemma:lemma1}
For any target $T_k$, the expected value of the squared Euclidean distance between $F_{avg,k}$ and manifold $M_k$ is no larger than that between $f^*_k$ and manifold $M_k$:
\begin{equation}
    \mathbb{E}_{g \sim M_k}\left[\|F_{avg,k} - g\|_2^2\right] \leq \mathbb{E}_{g \sim M_k}\left[\|f^*_k - g\|_2^2\right].
\end{equation}
\end{lemma}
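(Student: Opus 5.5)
The plan is to reduce Lemma~\ref{lemma:lemma1} to Assumption~\ref{ass:global_coverage} by convexity of the squared Euclidean norm. Fix a target $T_k$, a reference feature $f^*_k \in F^*_k$, and its augmented features $\{f_{k,i}\}_{i=1}^N$.

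The first step is a pointwise bound. For every $g \in M_k$, note that $F_{avg,k} - g = \frac{1}{N}\sum_{i=1}^N (f_{k,i} - g)$. Since $x \mapsto \|x\|_2^2$ is convex, Jensen's inequality over the uniform weights $1/N$ gives
\begin{equation*}
\|F_{avg,k} - g\|_2^2 \leq \frac{1}{N}\sum_{i=1}^N \|f_{k,i} - g\|_2^2 .
\end{equation*}
Equivalently, the bias--variance identity gives the exact statement
\begin{equation*}
\frac{1}{N}\sum_{i=1}^N \|f_{k,i} - g\|_2^2 = \|F_{avg,k} - g\|_2^2 + \frac{1}{N}\sum_{i=1}^N \|f_{k,i} - F_{avg,k}\|_2^2 .
\end{equation*}
Here the last term is nonnegative and independent of $g$. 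I would present this identity, since it makes the slack explicit.

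The second step takes the expectation $\mathbb{E}_{g\sim M_k}$ of both sides of the pointwise bound, using monotonicity and linearity of expectation. This yields $\mathbb{E}_{g\sim M_k}\|F_{avg,k}-g\|_2^2 \leq \mathbb{E}_{g\sim M_k}\big[\frac{1}{N}\sum_i \|f_{k,i}-g\|_2^2\big]$. Chaining with Assumption~\ref{ass:global_coverage} then bounds the right-hand side by $\mathbb{E}_{g\sim M_k}\|f^*_k-g\|_2^2$, which is the claim.

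There is no genuine obstacle. The only points needing care are measure-theoretic. First, the expectations over $M_k$ must be well-defined and finite; this holds because features are normalized, or at least bounded, so the integrands are bounded. Second, Assumption~\ref{ass:global_coverage} must be applied to the same $f^*_k$ and the same augmented set, which is how it is stated, for sufficiently large $N$. I would state these explicitly and note that the inequality is strict unless all $f_{k,i}$ coincide or the coverage assumption is tight.
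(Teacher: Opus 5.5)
Your proof is correct and is the paper's argument: pointwise Jensen for the convex map $a \mapsto \|a-g\|_2^2$, then $\mathbb{E}_{g\sim M_k}$ on both sides, then chaining with Assumption~\ref{ass:global_coverage}. The bias--variance identity is an extra the paper does not state, and it shows the inequality is strict whenever the $f_{k,i}$ are not all equal; equality requires both that they coincide and that the assumption is tight, so the ``or'' in your closing remark should be ``and''.
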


\begin{proof}
\label{proof:lemma1}
For any fixed $g \in M_k$, the squared Euclidean distance $\|a - g\|_2^2$ is a convex function of $a$. By Jensen’s inequality, for the multi-view augmented features set $\{f_{k,i}\}$:
\begin{equation}
    \left\| \frac{1}{N}\sum_{i=1}^N f_{k,i} - g \right\|_2^2 \leq \frac{1}{N}\sum_{i=1}^N \|f_{k,i} - g\|_2^2,
\end{equation}

\begin{equation}
   \|F_{avg,k} - g\|_2^2 \leq \frac{1}{N}\sum_{i=1}^N \|f_{k,i} - g\|_2^2.
    \label{eq:lemma1_jensen}
\end{equation}
Take the expectation of both sides of Eq.~\eqref{eq:lemma1_jensen} over $g \sim M_k$. For brevity, we denote this expectation by $\mathbb{E}[\cdot]$.
\begin{equation}
   \begin{aligned}
   \mathbb{E}&\left[\|F_{avg,k} - g\|_2^2\right] \leq \mathbb{E}\left[\frac{1}{N}\sum_{i=1}^N \|f_{k,i} - g\|_2^2\right].
   \end{aligned}
   \label{eq:lemma1_expectation}
\end{equation}
By Assumption~\ref{ass:global_coverage}, RHS of Eq.~\eqref{eq:lemma1_expectation} is bounded by:
\begin{equation}
   \mathbb{E}\left[\frac{1}{N}\sum_{i=1}^N \|f_{k,i} - g\|_2^2\right] \leq \mathbb{E}\left[\|f^*_k - g\|_2^2\right].
   \label{eq:lemma1_bound}
\end{equation}
Combining Eq.~\eqref{eq:lemma1_expectation} and Eq.~\eqref{eq:lemma1_bound}:
\begin{equation}
   \mathbb{E}\left[\|F_{avg,k} - g\|_2^2\right] \leq \mathbb{E}\left[\|f^*_k - g\|_2^2\right].
\end{equation}
\end{proof}

\begin{lemma}
\label{lemma:lemma2}
For any target $T_k$, the expected value of the squared Euclidean distance between $\hat{f}_k$ and manifold $M_k$ is smaller than that between $f^*_k$ and manifold $M_k$:
\begin{equation}
    \mathbb{E}_{g \sim M_k}\left[\|\hat{f}_k - g\|_2^2\right] \le \mathbb{E}_{g \sim M_k}\left[\|f^*_k - g\|_2^2\right].
\end{equation}
\end{lemma}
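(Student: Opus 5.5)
The plan is to work entirely on the unit sphere and reduce every squared distance to an inner product, rather than chaining Jensen's inequality through the normalization step. The reference features $f^*_k$, the augmented features $f_{k,i}$, the prototype $\hat f_k$ and the manifold features $g \sim M_k$ are all treated as $\ell_2$-normalized. I am assuming this is intended, since $F^*_k$ is defined as normalized, Eq.~\eqref{eq:feature_aggregate} normalizes, and Assumption~\ref{ass:manifold_cohesion_separation} is phrased in cosine similarities. Then for any unit vector $a$ we have $\|a-g\|_2^2 = 2 - 2\,a\cdot g$. So it suffices to show
\begin{equation*}
\mathbb{E}_{g\sim M_k}[\hat f_k\cdot g] \;\ge\; \mathbb{E}_{g\sim M_k}[f^*_k\cdot g].
\end{equation*}

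\textbf{Step 1 (bound the augmented mean).} Since each $f_{k,i}$ is a unit vector, $\frac{1}{N}\sum_i\|f_{k,i}-g\|_2^2 = 2 - 2\,F_{avg,k}\cdot g$. Assumption~\ref{ass:global_coverage} therefore reads directly as $\mathbb{E}[F_{avg,k}\cdot g] \ge \mathbb{E}[f^*_k\cdot g]$. This is sharper than what Lemma~\ref{lemma:lemma1} gives. Lemma~\ref{lemma:lemma1} concerns the unnormalized $F_{avg,k}$, whose norm deficit $1-\|F_{avg,k}\|_2^2$ would otherwise pollute the comparison.

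\textbf{Step 2 (unnormalized sum).} Set $u = \frac12(f^*_k + F_{avg,k})$, so that $\hat f_k = u/\|u\|_2$. By linearity and Step 1, $\mathbb{E}[u\cdot g] = \frac12\mathbb{E}[f^*_k\cdot g] + \frac12\mathbb{E}[F_{avg,k}\cdot g] \ge \mathbb{E}[f^*_k\cdot g]$. By Assumption~\ref{ass:manifold_cohesion_separation}, applied with $f^*_k\in M_k$, this expectation is at least $\delta>0$.

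\textbf{Step 3 (normalization).} I expect this to be the main obstacle. In general, rescaling a point inside the unit ball out to the sphere can increase its distance to sphere points, so a naive convexity or projection argument fails. In inner-product form the difficulty disappears. We have $r := \|u\|_2 \le 1$ by the triangle inequality, since $u$ is an average of vectors of norm at most one. Also $u\neq 0$, because $\mathbb{E}[u\cdot g]>0$. Note that $r$ does not depend on $g$. Hence
\begin{equation*}
\mathbb{E}[\hat f_k\cdot g] = \frac{\mathbb{E}[u\cdot g]}{r} \ge \mathbb{E}[u\cdot g],
\end{equation*}
where the inequality uses exactly the positivity from Step 2. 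Chaining with Step 2 and converting back via $\|a-g\|_2^2 = 2-2\,a\cdot g$ yields the claim.

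If instead one insists that the raw features are not normalized, the fallback is to apply Jensen's inequality to $u$ using Lemma~\ref{lemma:lemma1}. One would then need an extra condition, for instance that the expected cosine $\mathbb{E}[\hat f_k\cdot g]$ exceeds $(1+r)/2$, to control the normalization. The cohesion constant $\delta$ would be the natural tool to supply such a condition.
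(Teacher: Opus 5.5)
Your proposal is correct and follows essentially the same route as the paper's proof: you reduce to inner products via $\|a-g\|_2^2=2-2\,a\cdot g$, show $\mathbb{E}[F_{avg,k}\cdot g]\ge\mu:=\mathbb{E}[f^*_k\cdot g]$, and handle the normalization with the triangle-inequality bound $\|f^*_k+F_{avg,k}\|_2\le 2$, which is your $r\le 1$. Two of your steps make the argument tighter than the paper's. First, you get $\mathbb{E}[F_{avg,k}\cdot g]\ge\mu$ directly from Assumption~\ref{ass:global_coverage}, using that the $f_{k,i}$ are unit vectors; the paper instead derives it from Lemma~\ref{lemma:lemma1} by applying the unit-vector identity to $F_{avg,k}$, which is generally not unit-norm, so Lemma~\ref{lemma:lemma1} alone only yields the weaker $\mathbb{E}[F_{avg,k}\cdot g]\ge\mu-\tfrac12(1-\|F_{avg,k}\|_2^2)$. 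Second, you make explicit the positivity $\mu\ge\delta>0$ that the paper's final step $2\mu/L\ge\mu$ silently needs, whereas the paper's lower bound $L\ge\sqrt{1+3\delta}$ plays no role.
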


\begin{proof}
\label{proof:lemma2_proof}
For unit vectors $a, b$, the squared Euclidean distance can be rewritten using inner product:
\begin{equation}
   \|a - b\|_2^2 = 2 - 2a \cdot b.
   \label{eq:sqrt_dist}
\end{equation}
Applying Eq.~\eqref{eq:sqrt_dist} to $\hat{f}_k$ and $f^*_k$, Lemma~\ref{lemma:lemma2} reduces to proving:
\begin{equation}
   \mathbb{E}_{g \sim M_k}\left[\hat{f}_k \cdot g\right] \ge \mathbb{E}_{g \sim M_k}\left[f^*_k \cdot g\right].
   \label{eq:lemma2_target}
\end{equation}
For brevity, we denote this expectation by $\mathbb{E}[\cdot]$. Substituting the definition of $\hat{f}_k$ into the left-hand side of Eq.~\eqref{eq:lemma2_target} gives:
\begin{equation}
   \mathbb{E}\left[\hat{f}_k \cdot g\right] = \frac{\mathbb{E}\left[(f^*_k + F_{avg,k}) \cdot g\right]}{\|f^*_k + F_{avg,k}\|_2}.
   \label{eq:lemma2_full_expansion}
\end{equation}
By linearity of expectation, the expectation term expands to:
\begin{equation}
   \mathbb{E}\left[(f^*_k + F_{avg,k}) \cdot g\right] = \mathbb{E}\left[f^*_k \cdot g\right]+ \mathbb{E}\left[F_{avg,k} \cdot g\right]. 
\end{equation}
By Lemma~\ref{lemma:lemma1} and Eq.~\eqref{eq:sqrt_dist}, we have:
\begin{equation}
   2 - 2\mathbb{E}\left[F_{avg,k} \cdot g\right] \leq 2 - 2\mathbb{E}\left[f^*_k \cdot g\right],
   \label{eq:lemma2_expect}
\end{equation}
simplifying gives:
\begin{equation}
   \mathbb{E}\left[F_{avg,k} \cdot g\right] \geq \mathbb{E}\left[f^*_k \cdot g\right] = \mu.
   \label{eq:lemma2_expect_simple}
\end{equation}
Substitute Eq.~\eqref{eq:lemma2_expect_simple} into Eq.~\eqref{eq:lemma2_expect}:
\begin{equation}
   \mathbb{E}\left[(f^*_k + F_{avg,k}) \cdot g\right] \geq 2\mu.
   \label{eq:numerator_lb}
\end{equation}
We now bound $\|f^*_k + F_{avg,k}\|_2$ (denoted as $L$). By Assumption~\ref{ass:manifold_cohesion_separation}, $f^*_k \cdot F_{avg,k} \geq \delta$, and $\|F_{avg,k}\|_2^2 \geq \delta$. Thus,
\begin{equation}
    L \geq \sqrt{1 + 3\delta}.
    \label{eq:l_lb}
\end{equation}
By the triangle inequality, $L\leq \|f^*_k\|_2 + \|F_{avg,k}\|_2$. Thus,
\begin{equation}
   L \leq 2.
    \label{eq:l_ub}
\end{equation}
Substitute Eq.~\eqref{eq:numerator_lb}, Eq.~\eqref{eq:l_lb} and Eq.~\eqref{eq:l_ub} into \eqref{eq:lemma2_full_expansion}:
\begin{equation}
   \mathbb{E}\left[\hat{f}_k \cdot g\right] \geq \mu.
\end{equation}
This proves Eq.~\eqref{eq:lemma2_target}, and thus the lemma.
\end{proof}

\begin{prop}
\label{prop:min_distance}
For any two distinct targets $T_k \neq T_j$, the minimum squared distance between any pair of prototype features $(\hat{f}_k, \hat{f}_j)$ sampled from $\hat{F}_k$ and $\hat{F}_j$ is larger than the minimum squared distance between any pair of reference features $(f^*_k, f^*_j)$ sampled from $F^*_k$ and $F^*_j$:
\begin{equation}
\min_{\hat{f}_k \in \hat{F}_k, \hat{f}_j \in \hat{F}_j} \|\hat{f}_k - \hat{f}_j\|_2^2 \geq \min_{f^*_k \in F^*_k, f^*_j \in F^*_j} \|f^*_k - f^*_j\|_2^2.
\end{equation}
\end{prop}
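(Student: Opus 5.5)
The plan is to reduce the distance inequality to an inequality between cosine similarities, and then control the similarity of two prototypes through the similarities of their building blocks. Every element of $\hat{F}_k$ and $F^*_k$ is a unit vector, so by Eq.~\eqref{eq:sqrt_dist} we have $\|a-b\|_2^2 = 2-2\,a\cdot b$. Proposition~\ref{prop:min_distance} is therefore equivalent to
\begin{equation*}
\max_{\hat{f}_k\in\hat{F}_k,\ \hat{f}_j\in\hat{F}_j} \hat{f}_k\cdot\hat{f}_j \;\le\; s^\star := \max_{f^*_k\in F^*_k,\ f^*_j\in F^*_j} f^*_k\cdot f^*_j .
\end{equation*}
Write $\hat{f}_k = (f^*_k+F_{avg,k})/L_k$ and $\hat{f}_j = (f^*_j+F_{avg,j})/L_j$, as in the proof of Lemma~\ref{lemma:lemma2}. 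Expanding the numerator gives four cross terms:
\begin{equation*}
f^*_k\cdot f^*_j,\qquad f^*_k\cdot F_{avg,j},\qquad F_{avg,k}\cdot f^*_j,\qquad F_{avg,k}\cdot F_{avg,j}.
\end{equation*}

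\textbf{Step 1 (numerator).} Bound each cross term by a common constant $s$. Each $F_{avg}$ is an average of features lying on the corresponding manifold, so each cross term is an average of pairwise similarities between a point of $M_k$ and a point of $M_j$. Assumption~\ref{ass:manifold_cohesion_separation} then gives $s=\eta$, and the numerator is at most $4s$. To compare with $s^\star$ rather than $\eta$, I would take $s$ to be the largest inter-manifold similarity and treat the reference sets as sampled from the manifolds, so that $s=s^\star$. This identification has to be made explicit as an extra assumption, because the statement's hypotheses only give $s^\star\le\eta$.

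\textbf{Step 2 (denominator, case $s\le 0$).} By the triangle-inequality bound Eq.~\eqref{eq:l_ub}, $L_kL_j\le 4$. Dividing a nonpositive numerator $\le 4s$ by a positive quantity $\le 4$ yields $\hat{f}_k\cdot\hat{f}_j\le s$, which is what we want in this case.

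\textbf{Step 3 (case $s>0$).} This is where I expect the main obstacle. Here one needs a lower bound on $L_kL_j$, and Eq.~\eqref{eq:l_lb} only gives $L_kL_j\ge 1+3\delta$. That yields $\hat{f}_k\cdot\hat{f}_j\le 4s/(1+3\delta)$, which is $\le s$ only when $\delta=1$. The crude termwise bound is therefore too weak.

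My first attempt would be to avoid bounding the four terms uniformly. Instead, I would split each of $f^*_k$ and $F_{avg,k}$ into a component along the manifold's mean direction and an orthogonal residual. Lemma~\ref{lemma:lemma1} and Lemma~\ref{lemma:lemma2} show that averaging moves the prototype toward the manifold's mean direction and shrinks the residual. The idea is that cross-manifold similarity is carried by the residuals, while the mean directions of $M_k$ and $M_j$ have similarity at most $\eta$. If this does not close, I would add a mild extra hypothesis: that the augmented features are at least as separated from the other target as the reference, i.e. $f\cdot F_{avg,j}\le f\cdot f^*_j$. Under that hypothesis the numerator is at most $(1+\|F_{avg,k}\|)(1+\|F_{avg,j}\|)\,s^\star$ in the relevant direction, while $L_k\ge 1+\|F_{avg,k}\|\,\delta$, and the ratio can be compared directly with $s^\star$.

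Finally, taking the maximum over all prototype pairs and translating back through $\|a-b\|_2^2 = 2-2\,a\cdot b$ gives the stated inequality between the minimum squared distances.
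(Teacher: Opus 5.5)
Your Steps 1--2 are sound, granting the identification you flag yourself: $s^\star$ must be the largest inter-manifold similarity, e.g.\ $F^*_k=M_k$, $F^*_j=M_j$ with the augmented features lying on the manifolds. So you do have a proof when $s^\star\le 0$. The gap is Step 3, and a sharper estimate will not fix it. For $s^\star>0$ the inequality does not follow from Assumptions~\ref{ass:global_coverage} and~\ref{ass:manifold_cohesion_separation}, even with your identification and your extra hypothesis $f\cdot F_{avg,j}\le f\cdot f^*_j$. Take, in $\mathbb{R}^3$,
\begin{equation*}
\begin{aligned}
&u_\theta=(\cos\theta,\sin\theta,0),\quad M_k=F^*_k=\{u_{0^\circ},u_{80^\circ}\},\\
&M_j=F^*_j=\{p\},\quad p=\bigl(0.9\cos 40^\circ,\,0.9\sin 40^\circ,\,\sqrt{0.19}\bigr).
\end{aligned}
\end{equation*}
Then $u_{0^\circ}\cdot p=u_{80^\circ}\cdot p=0.9\cos 40^\circ\approx 0.69=s^\star$. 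Hence Assumption~\ref{ass:manifold_cohesion_separation} holds with $\delta=\cos 80^\circ$ and $\eta=0.7$. Augment $u_{0^\circ}$ by copies of $u_{80^\circ}$ (and vice versa), and $p$ by copies of $p$. Assumption~\ref{ass:global_coverage} then holds with equality, and so does your hypothesis. Yet $\hat f_k=u_{40^\circ}$ and $\hat f_j=p$, so $\hat f_k\cdot\hat f_j=0.9>s^\star$. The minimum squared distance therefore drops from about $0.62$ to $0.2$. All four cross terms equal $s^\star$ here. The loss comes entirely from dividing $4s^\star$ by $L_kL_j=4\cos 40^\circ<4$, which is exactly the obstacle you identified, and it is a real one.

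The same example shows why neither fallback can work. The residual decomposition needs the mean directions of $M_k$ and $M_j$ to be at most $\eta$-similar. Assumption~\ref{ass:manifold_cohesion_separation} separates the manifolds only pointwise, not their normalized averages: here the mean direction $u_{40^\circ}$ of $M_k$ has similarity $0.9>\eta$ with $M_j$. The extra-hypothesis computation also points the wrong way. With $a=\|F_{avg,k}\|$ and $b=\|F_{avg,j}\|$, your bound is $(1+a)(1+b)s^\star/\bigl((1+a\delta)(1+b\delta)\bigr)$. Since $\delta<1$, this exceeds $s^\star$ whenever $a,b>0$.

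What the statement actually needs is a hypothesis on the spherical hull of each manifold. One option is to require the separation bound to hold for normalized averages of points of $M_k$ and $M_j$. Another is to require each $M_k$ to be closed under normalized averaging; then $\hat f_k\in M_k$ and the claim is immediate under your identification.

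For comparison, the paper performs the same reduction to cosine similarities. It then asserts $\max\hat f_k\cdot\hat f_j\le\eta+\epsilon\le\max f^*_k\cdot f^*_j$ for some $\epsilon>0$ ``by Assumption 2 and Lemma 2''. This does not supply the missing step either. Lemma~\ref{lemma:lemma2} is an intra-target statement. Moreover, for references on the manifolds, Assumption~\ref{ass:manifold_cohesion_separation} gives $\max f^*_k\cdot f^*_j\le\eta<\eta+\epsilon$, which contradicts the asserted chain. Your diagnosis of where the difficulty lies is accurate, and the paper does not resolve it.
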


\begin{proof}
Since all features are unit vectors, proving the inequality is equivalent to proving:
\begin{equation}
\max_{\hat{f}_k, \hat{f}_j} \hat{f}_k \cdot \hat{f}_j \leq \max_{f^*_k, f^*_j} f^*_k \cdot f^*_j.
\end{equation}
By Assumption~\ref{ass:manifold_cohesion_separation} and Lemma~\ref{lemma:lemma2}, there exists a constant $\epsilon > 0$ such that:
\begin{equation}
\max_{\hat{f}_k, \hat{f}_j} \hat{f}_k \cdot \hat{f}_j \leq \eta + \epsilon \leq \max_{f^*_k, f^*_j} f^*_k \cdot f^*_j.
\end{equation}
Therefore:
\begin{equation}
\min_{\hat{f}_k, \hat{f}_j} \|\hat{f}_k - \hat{f}_j\|_2^2 \geq \min_{f^*_k, f^*_j} \|f^*_k - f^*_j\|_2^2.
\end{equation}
\end{proof}

\begin{table*}[t]
    \small
    \centering
        \caption{Comprehensive ablation studies on key hyperparameters. We analyze the EMA momentum $\beta$, Kalman filter parameters ($\gamma$, $\lambda$), and the feature descriptor model size. Models \texttt{B}, \texttt{H+} and \texttt{L} refer to \texttt{dinov3\_vitb16}, \texttt{dinov3\_vith+16} and \texttt{dinov3\_vitl16}, respectively. \textbf{Bold} indicates the best performance, \underline{underline} indicates the second best. Default settings are marked with a \colorbox{aliceblue}{light blue background}.}
    \renewcommand{\arraystretch}{1.0}
    \renewcommand{\tabcolsep}{5.5pt}
    \begin{tabular}{c|c|ccc|ccc|ccc|ccc}
        \toprule
        \multirow{2}{*}{Parameter} & \multirow{2}{*}{Value} & \multicolumn{3}{c|}{Parking Lot (2D)} & \multicolumn{3}{c|}{UrbanCity (4D)} & \multicolumn{3}{c|}{ComplexRoom (4D)} & \multicolumn{3}{c}{Avg.} \\ 
         & & $AR\uparrow$ & $EL\uparrow$ & $SR\uparrow$ & $AR\uparrow$ & $EL\uparrow$ & $SR\uparrow$ & $AR\uparrow$ & $EL\uparrow$ & $SR\uparrow$ & $AR\uparrow$ & $EL\uparrow$ & $SR\uparrow$ \\ \midrule
         
         \multirow{4}{*}{\shortstack{EMA \\ Momentum ($\beta$)}} 
         & 0.6 & 378 & \textbf{485} & \underline{0.91} & \underline{381} & \underline{485} & \underline{0.92} & \textbf{402} & \textbf{487} & \textbf{0.95} & \underline{387} & \textbf{486} & \textbf{0.93} \\
         & 0.7 & 381 & \underline{484} & \underline{0.91} & 378 & 483 & 0.91 & \underline{400} & 480 & \underline{0.93} & 386 & 482 & \underline{0.92} \\
         & \cellcolor{aliceblue}0.8 & \cellcolor{aliceblue}\textbf{392} & \cellcolor{aliceblue}482 & \cellcolor{aliceblue}\textbf{0.93} & \cellcolor{aliceblue}\textbf{385} & \cellcolor{aliceblue}\textbf{486} & \cellcolor{aliceblue}\textbf{0.95} & \cellcolor{aliceblue}392 & \cellcolor{aliceblue}\underline{481} & \cellcolor{aliceblue}0.92 & \cellcolor{aliceblue}\textbf{390} & \cellcolor{aliceblue}\underline{483} & \cellcolor{aliceblue}\textbf{0.93}\\
         & 0.9 & \underline{383} & 472 & 0.90 & 366 & 471 & 0.89 & 393 & 479 & 0.91 & 381 & 474 & 0.90 \\
         \midrule

         \multirow{3}{*}{\shortstack{Kalman Filter \\ Center ($\gamma$)}} 
         & 0.3 & \underline{385} & 469 & \underline{0.91} & \underline{381} & \underline{471} & \underline{0.92} & \underline{397} & \underline{488} & \underline{0.93} & \underline{388} & 476 & \underline{0.92} \\
         & \cellcolor{aliceblue}0.4 & \cellcolor{aliceblue}\textbf{392} & \cellcolor{aliceblue}\underline{482} & \cellcolor{aliceblue}\textbf{0.93} & \cellcolor{aliceblue}\textbf{385} & \cellcolor{aliceblue}\textbf{486} & \cellcolor{aliceblue}\textbf{0.95} & \cellcolor{aliceblue}392 & \cellcolor{aliceblue}{481} & \cellcolor{aliceblue}0.92 & \cellcolor{aliceblue}\textbf{390} & \cellcolor{aliceblue}\underline{483} & \cellcolor{aliceblue}\textbf{0.93}\\
         & 0.5 & 381 & \textbf{490} & \textbf{0.93} & 365 & 467 & 0.90 & \textbf{416} & \textbf{495} & \textbf{0.97} & 387 & \textbf{484} & \textbf{0.93} \\
         \midrule

         \multirow{3}{*}{\shortstack{Kalman Filter \\ Slope ($\lambda$)}} 
         & 13 & \underline{387} & 477 & 0.91 & 361 & 470 & 0.87 & \textbf{418} & \textbf{495} & \textbf{0.96} & \underline{389} & 481 & \underline{0.91} \\
         & \cellcolor{aliceblue}15 & \cellcolor{aliceblue}\textbf{392} & \cellcolor{aliceblue}\textbf{482} & \cellcolor{aliceblue}\textbf{0.93} & \cellcolor{aliceblue}\textbf{385} & \cellcolor{aliceblue}\textbf{486} & \cellcolor{aliceblue}\textbf{0.95} & \cellcolor{aliceblue}392 & \cellcolor{aliceblue}481 & \cellcolor{aliceblue}0.92 & \cellcolor{aliceblue}\textbf{390} & \cellcolor{aliceblue}\textbf{483} & \cellcolor{aliceblue}\textbf{0.93} \\
         & 17 & 375 & 472 & 0.90 & \underline{381} & \underline{483} & \underline{0.94} & \underline{414} & \underline{490} & \underline{0.94} & \textbf{390} & \underline{482} & \textbf{0.93} \\
          \midrule
          
         \multirow{3}{*}{\shortstack{Feature Descriptor \\ (DINOv3) Size}} 
         & \texttt{B} & \underline{389} & \textbf{485} & \textbf{0.93} & \textbf{395} & 479 & 0.89 & \textbf{406} & \textbf{484} & \textbf{0.95} & \textbf{397} & \underline{483} & \underline{0.92} \\
         & \cellcolor{aliceblue}\texttt{L} & \cellcolor{aliceblue}\textbf{392} & \cellcolor{aliceblue}482 & \cellcolor{aliceblue}\textbf{0.93} & \cellcolor{aliceblue}\underline{385} & \cellcolor{aliceblue}\textbf{486} & \cellcolor{aliceblue}\textbf{0.95} & \cellcolor{aliceblue}392 & \cellcolor{aliceblue}\underline{481} & \cellcolor{aliceblue}0.92 & \cellcolor{aliceblue}\underline{390} & \cellcolor{aliceblue}\underline{483} & \cellcolor{aliceblue}\textbf{0.93}\\
         & \texttt{H+} & 388 & \underline{484} & \textbf{0.93} & 377 & \underline{485} & \underline{0.92} & \underline{400} & \textbf{484} & \underline{0.94} & 388 & \textbf{484} & \textbf{0.93} \\
         \bottomrule
    \end{tabular}
    \label{tab:ablation_hyperparameters}
\end{table*}

\begin{table*}[t]
    \small
    \centering
        \caption{Results on UnrealCV benchmark. \textbf{Bold} represents the best while \underline{underline} represents the second.}
\renewcommand{\arraystretch}{0.98}
\renewcommand{\tabcolsep}{5.1pt}
    \begin{tabular}{l|c|ccc|ccc|ccc|ccc|ccc}
        \toprule
        \multirow{2}{*}{Trackers} & \multirow{2}{*}{Publication} & \multicolumn{3}{c|}{SimpleRoom} & \multicolumn{3}{c|}{Parking Lot} & \multicolumn{3}{c|}{UrbanCity} & \multicolumn{3}{c|}{UrbanRoad} & \multicolumn{3}{c}{Snow Village} \\ 
         & & $AR$ & $EL$ & $SR$ & $AR$ & $EL$ & $SR$ & $AR$ & $EL$ & $SR$ & $AR$ & $EL$ & $SR$ & $AR$ & $EL$ & $SR$ \\ \midrule
         DiMP \cite{dimp} & ICCV 2019 & 336 & \textbf{500} & \textbf{1.00} & 166 & 327 & 0.48 & 239 & 401 & 0.66 & 168 & 308 & 0.33 & 110 & 301 & 0.43 \\
         SARL \cite{sarl} & TPAMI 2019 & 368 & \textbf{500} & \textbf{1.00} & 92 & 301 & 0.22 & 331 & 471 & 0.86 & 207 & 378 & 0.48 & 203 & 318 & 0.31 \\
         AD-VAT \cite{advat} & ICLR 2019 & 356 & \textbf{500} & \textbf{1.00} & 86 & 302 & 0.20 & 335 & 484 & 0.88 & 246 & 429 & 0.60 & 169 & 364 & 0.44 \\
         AD-VAT+ \cite{advat+} & TPAMI 2019 & 373 & \textbf{500} & \textbf{1.00} & 267 & 439 & 0.60 & \underline{389} & \underline{497} & \underline{0.94} & 326 & 471 & 0.80 & 182 & 365 & 0.44 \\
         TS \cite{ts} & ICML 2021 & \textbf{412} & \textbf{500} & \textbf{1.00} & 265 & 472 & 0.89 & 341 & 496 & \underline{0.94} & 308 & 480 & 0.84 & 234 & 424 & 0.63 \\
         RSPT \cite{rspt} & AAAI 2023 & \underline{398} & \textbf{500} & \textbf{1.00} & \underline{314} & 480 & 0.80 & 341 & \textbf{500} & \textbf{1.00} & \underline{346} & \textbf{500} & \textbf{1.00} & 248 & 410 & 0.80 \\
         EVT \cite{evt} & ECCV 2024 & 374 & \textbf{500} & \textbf{1.00} & 274 & 484 & 0.92 & 306 & \textbf{500} & \textbf{1.00} & 300 & \underline{496} & \underline{0.96} & 229 & \underline{471} & 0.87 \\
         FAn \cite{fan} & RAL 2024 & 318 & \textbf{500} & \textbf{1.00} & 215 & 481 & \underline{0.96} & 193 & 466 & 0.90 & 152 & 409 & 0.76 & 306 & 456 & 0.90 \\
         FAn+SAM2 \cite{sam2} & ICLR 2025 & 329 & \textbf{500} & \textbf{1.00} & 215 & \underline{491} & \underline{0.96} & 217 & 470 & 0.92 & 207 & 442 & 0.90 & \underline{317} & 465 & \underline{0.94} \\
         TrackVLA \cite{trackvla} & CoRL 2025 & - & \textbf{500} & \textbf{1.00} & - & \textbf{500} & \textbf{1.00} & - & \textbf{500} & \textbf{1.00} & - & \textbf{500} & \textbf{1.00} & - & \textbf{500} & \textbf{1.00} \\
         \rowcolor{aliceblue} \textbf{Ours} & CVPR 2026 & 389 & \textbf{500} & \textbf{1.00} & \textbf{382} & \textbf{500} & \textbf{1.00} & \textbf{390} & \textbf{500} & \textbf{1.00} & \textbf{401} & \textbf{500} & \textbf{1.00} & \textbf{391} & \textbf{500} & \textbf{1.00} \\
         \bottomrule
    \end{tabular}
    \label{tab:unrealcv_full}
\end{table*}

\section{More Details of \methodname} \label{sec:details}

\subsection{Implementation Details}

We train \methodname for 60 epochs using a batch size of 64 on a single RTX 3090 GPU, with the entire training process taking about 15 hours. We extract image features using the pre-trained \texttt{dinov3\_vitl16} (300M parameters)~\cite{dinov3}, which processes 384$\times$384 input images and output 768-dimensional features. Moreover, we generate candidate masks using the \texttt{yoloe-l1l-seg} model~\cite{yoloe}, and select the candidate with the highest similarity exceeding the matching threshold $\eta_s=0.5$ as the target. During tracking, the visual prototype is updated online via an exponential moving average (EMA):
\begin{equation}
    \tilde{\mathbf{f}}' \gets \beta \tilde{\mathbf{f}}' + (1-\beta) \hat{\mathbf{f}}_{\text{tar}},
    \label{eq:EMA}
\end{equation}
where the EMA momentum $\beta$ is set to 0.8. Additionally, we employ a confidence-aware Kalman filter, where the measurement noise covariance $\mathbf{R}_t$ is modeled as a function of the detection confidence $c_t$:
\begin{equation}
    \mathbf{R}_t = \sigma^2(c_t) \mathbf{I}, \quad 
    \sigma^2(c_t) = \frac{1}{1 + e^{\lambda \cdot (c_t - \gamma)}},
    \label{eq:confidence_aware}
\end{equation}
where the hyperparameters are set to $\lambda = 15.0$ and $\gamma = 0.4$. 

\subsection{Hyperparameter Analysis}

We perform ablation experiments to analyze the influence of key hyperparameters in the \methodname method, including the EMA momentum $\beta$ of the online visual prototype enhancement module, the parameters $(\lambda, \gamma)$ of the confidence-aware Kalman filter, and the model size of the feature extractor.

\textbf{EMA Momentum.} We first analyze the effect of the momentum coefficient $\beta$ in the EMA update of the online prototype enhancement module, as shown in Eq.~\eqref{eq:EMA}. We conducted experiments with $\beta$ set to 0.6, 0.7, 0.8, and 0.9. As shown in Tab.~\ref{tab:ablation_hyperparameters} (rows 1-4), \methodname exhibits consistently strong performance across all settings, and our default choice of $\beta=0.8$ achieves the best results,
providing a good balance between historical and current observations.

\textbf{Kalman Filter Parameters.} We then analyze the parameters of the confidence-aware Kalman filter in Eq.~\eqref{eq:confidence_aware}. To evaluate the effect of $\lambda$, we compare the default setting $\lambda=15$ with $\lambda=13$ and $\lambda=17$ (Tab.\ref{tab:ablation_hyperparameters} rows 5-7). We then compare the default $\gamma=0.4$ with $\gamma=0.3$ and $\gamma=0.5$, as shown in Tab.\ref{tab:ablation_hyperparameters} (rows 8-10). The results demonstrate that \methodname is robust to hyperparameter variations.

\begin{figure*}[t]
  \centering
  \includegraphics[width=\linewidth]{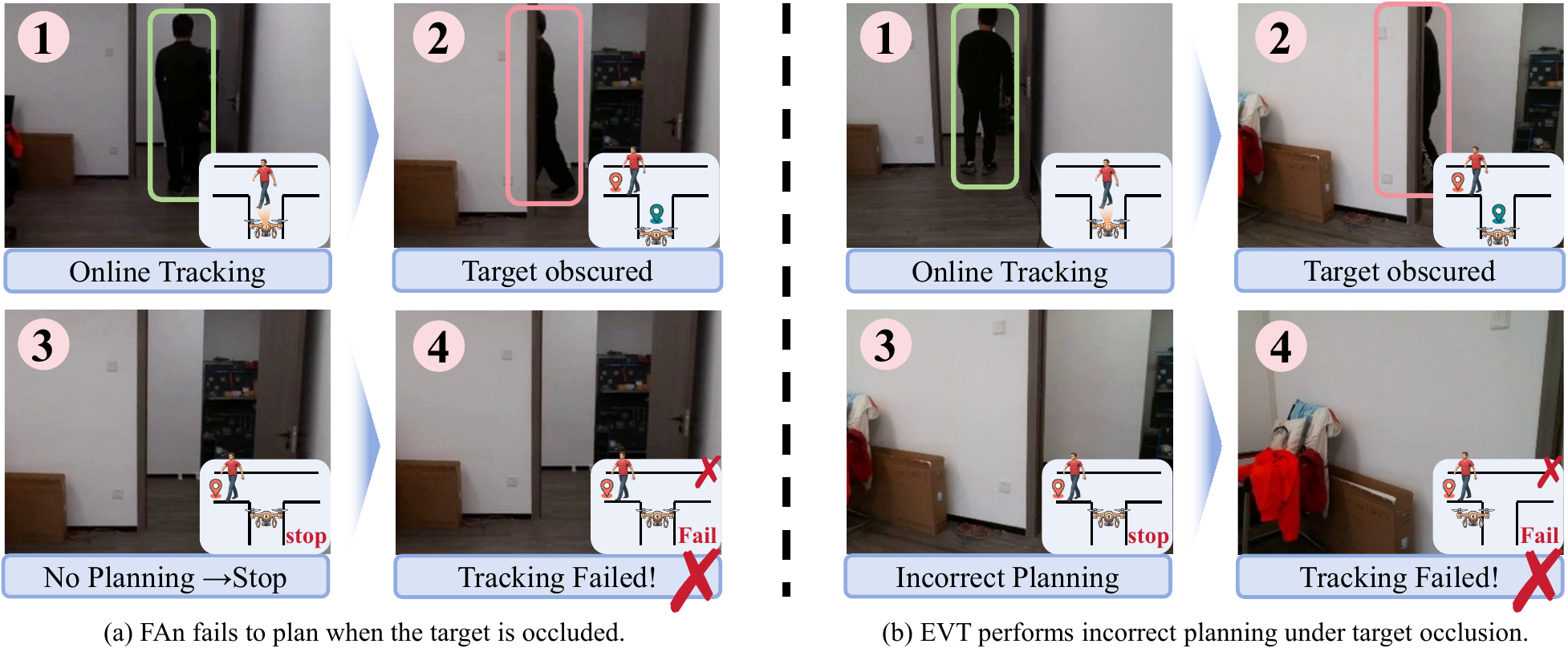}
    \caption{Failure cases of the baseline method FAn~\cite{fan} and EVT~\cite{evt} on real drone \textit{DJI Tello}.}
  \label{fig:real_drone_fan_evt}
\end{figure*}

\begin{figure}[t]
  \centering
  \includegraphics[width=\linewidth]{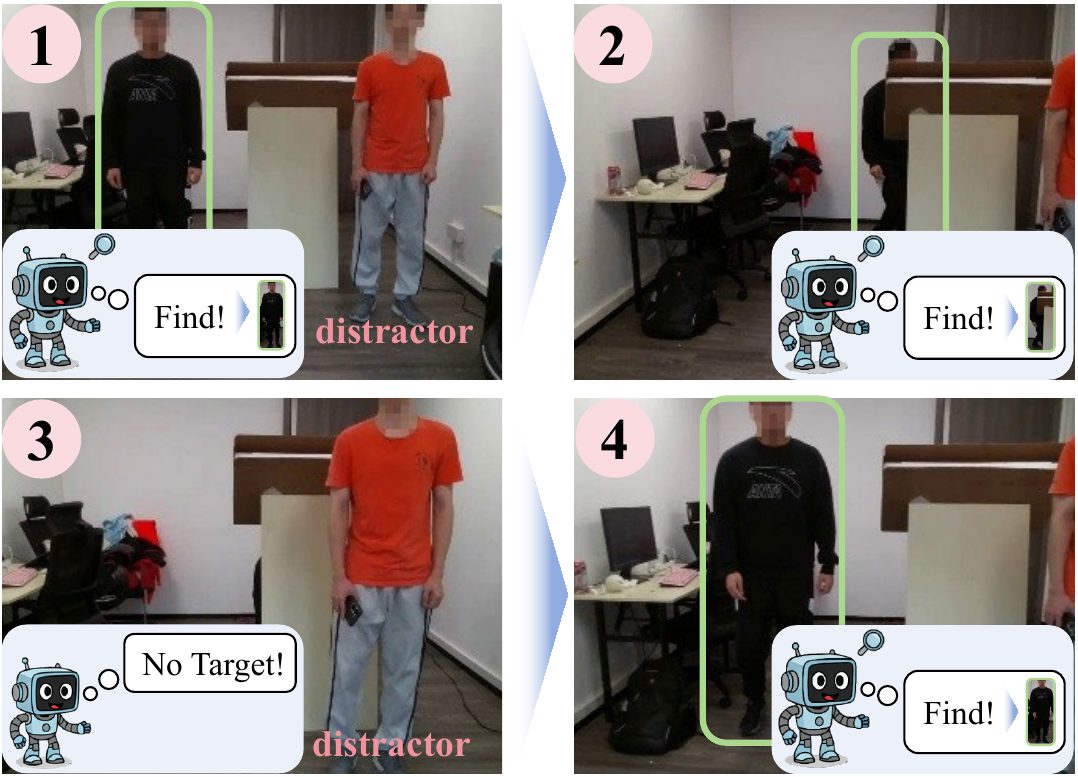}
    \caption{\methodname accurately detects the target against distractors.}
  \label{fig:real_drone_instance}
\end{figure}

\textbf{Feature Extractor Size.} To demonstrate the effectiveness of \methodname across models of different sizes, we evaluate its performance by comparing the default feature extractor \texttt{dinov3\_vitl16} (300M parameters), with \texttt{dinov3\_vitb16} (86M) and \texttt{dinov3\_vith+16} (840M) variants. As shown in Tab.~\ref{tab:ablation_hyperparameters} (Rows 11–13), replacing the 300M extractor with the smaller 86M variant results in only a 1.1\% relative performance drop, indicating that \methodname is highly robust across model sizes.

\section{More Experimental Results}\label{sec:exp}

\subsection{Comparison Experiments}

\textbf{Details of Baselines.} We compare \methodname against 12 baselines: DiMP~\cite{dimp} combines a pre-trained passive tracker with a PID \cite{pid} controller. SARL~\cite{sarl} is an end-to-end RL tracker with a Conv-LSTM backbone. AD-VAT~\cite{advat} and AD-VAT+~\cite{advat+} use an asymmetric dueling RL framework with adversarial learning. RSPT~\cite{rspt} leverages RGB-D input for structure-aware tracking. Cross-modal Teacher-Student (TS)~\cite{ts} employs a cross-modal teacher-student strategy for distraction-robust tracking. EVT~\cite{evt} integrates visual foundation models with offline RL for efficient embodied tracking. Follow Anything (FAn)~\cite{fan} enables open-vocabulary tracking by combining foundation models, and FAn+SAM2 replaces its segmentation module with the latest SAM2 \cite{sam2} model. D-VAT~\cite{dvat} maps RGB observations directly to continuous control signals through reinforcement learning. GC-VAT~\cite{gcvat} designs a goal-centered reward for effective tracking in complex environments. TrackVLA~\cite{trackvla} unifies target recognition and tracking within a single VLA framework built upon an LLM backbone.

\textbf{Additional Results on UnrealCV.} We evaluate all methods on the UnrealCV benchmark for single-target tracking in five distinct virtual scenes: \texttt{SimpleRoom}, \texttt{Parking Lot}, \texttt{UrbanCity}, \texttt{UrbanRoad}, and \texttt{Snow Village}, with detailed results presented in Tab.~\ref{tab:unrealcv_full}. \methodname achieves the highest success rate ($SR=1.00$) across all five scenes, with zero target loss ($EL=500$) in every episode.

\subsection{Experiments in Real-world Scenarios}

To evaluate the real-world applicability and robustness of \methodname, we deploy it on a \textit{DJI Tello} drone. We use the \texttt{DJITelloPy} library~\cite{djitellopy} to capture video streams (at a resolution of $320\times240$) from the drone, transmit them over the network to a ground station equipped with an NVIDIA RTX 3090 GPU, and return the control signals generated by \methodname. The drone operates in velocity control mode, with control commands including linear and angular velocities. The entire pipeline runs at approximately 35 FPS in our experiments. We then evaluate \methodname on two challenging scenarios: long-term tracking with occlusions, and specific instance discrimination against same-category distractors.  

\textbf{Effectiveness in Long-Term Tracking.} \methodname shows superior performance over all baselines in long-term real-robot tracking. It can actively plan collision-free paths and recover the occluded target, as shown in Fig. 6 of the main article. However, as illustrated in Fig.~\ref{fig:real_drone_fan_evt}(a), FAn~\cite{fan} lacks a planning module, causing the robot to stop when the target remains occluded for an extended period, leading to tracking failure. Moreover, as shown in Fig.~\ref{fig:real_drone_fan_evt}(b), EVT~\cite{evt} method, which adopts an offline RL-based planner, still produces incorrect plans under occlusion and fails to navigate around obstacles to recover the target. \textit{Full long-term tracking videos of \methodname, FAn, and EVT methods are provided in the supplementary video.}

\textbf{Effectiveness Under Distractors.} As shown in Fig.~\ref{fig:real_drone_instance}, we evaluate \methodname under distractors, where the person in black is the true target and the one in red is the distractor. During online tracking, \methodname robustly tracks the correct target and remains unaffected when only the distractor is visible, as illustrated in the bottom-left subfigure of Fig.~\ref{fig:real_drone_instance}.

\end{document}